
\documentclass[unnumsec,webpdf,contemporary,large]{}%





\graphicspath{{Fig/}}


\theoremstyle{thmstyleone}%
\newtheorem{theorem}{Theorem}
\newtheorem{proposition}[theorem]{Proposition}%
\theoremstyle{thmstyletwo}%
\theoremstyle{thmstylethree}%
\newtheorem{definition}{Definition}

\usepackage{graphicx}
\usepackage{subcaption}
\usepackage{amsmath} 
\usepackage{framed}

\usepackage{color}
\usepackage{booktabs}
\newcommand{\chen}[1]{\textcolor{black}{#1}}
\newcommand{\chenn}[1]{\textcolor{black}{#1}}
\newcommand{\chennn}[1]{\textcolor{black}{#1}}

\begin{document}

\journaltitle{Journal Title Here}
\DOI{DOI HERE}
\copyrightyear{2022}
\pubyear{2019}
\access{Advance Access Publication Date: Day Month Year}
\appnotes{Paper}

\firstpage{1}


\title[DGIB4SL]{Interpretable High-order Knowledge Graph Neural Network for Predicting Synthetic Lethality in Human Cancers}

\author[1]{Xuexin Chen}
\author[1,2,$\ast$]{Ruichu Cai}
\author[1]{Zhengting Huang}
\author[3]{Zijian Li}
\author[4,5]{Jie Zheng}
\author[6,$\ast$]{Min Wu}

\authormark{Chen et al.}

\address[1]{\orgdiv{School of Computer Science}, \orgname{Guangdong University of Technology}, \orgaddress{ \postcode{510006}, \state{Guangdong}, \country{China}}}
\address[2]{\orgdiv{Pazhou Laboratory (Huangpu)}, \orgaddress{\state{Guangzhou}, \country{China}}}
\address[3]{\orgname{Mohamed bin Zayed University of Artificial Intelligence}, \orgaddress{\country{Abu Dhabi}}}
\address[4]{\orgdiv{School of Information Science and Technology}, \orgname{ShanghaiTech University}, \orgaddress{\postcode{201210}, \country{China}}}
\address[5]{\orgdiv{Shanghai Engineering Research Center of Intelligent Vision and Imaging}, \orgname{ShanghaiTech University}, \orgaddress{\postcode{201210}, \country{China}}}
\address[6]{\orgdiv{Institute for Infocomm Research (I$^{2}$R)}, \orgname{A*STAR}, \orgaddress{\postcode{138632},  \country{Singapore}}}

\corresp[$\ast$]{Corresponding authors: \href{cairuichu@gmail.com}{cairuichu@gmail.com}, \href{wumin@i2r.a-star.edu.sg}{wumin@i2r.a-star.edu.sg}}

\received{Date}{0}{Year}
\revised{Date}{0}{Year}
\accepted{Date}{0}{Year}

\abstract{Synthetic lethality (SL) is a promising gene interaction for cancer therapy. Recent SL prediction methods integrate knowledge graphs (KGs) into graph neural networks (GNNs) and employ attention mechanisms to extract local subgraphs as explanations for target gene pairs. However, attention mechanisms often lack fidelity, typically generate a single explanation per gene pair, and fail to ensure trustworthy high-order structures in their explanations.
To overcome these limitations, we propose Diverse Graph Information Bottleneck for Synthetic Lethality (DGIB4SL), a KG-based GNN that generates multiple faithful explanations for the same gene pair and effectively encodes high-order structures. Specifically, we introduce a novel DGIB objective, integrating a Determinant Point Process (DPP) constraint into the standard IB objective, and employ 13 motif-based adjacency matrices to capture high-order structures in gene representations. Experimental results show that  DGIB4SL outperforms state-of-the-art baselines and provides multiple explanations for SL prediction, revealing diverse biological mechanisms  underlying SL inference.
}
\keywords{synthetic lethality, machine learning explainability, graph neural network, information bottleneck}


\maketitle

\section{Introduction}
\begin{figure}[!t]
\centering
\includegraphics[width=0.9\columnwidth]{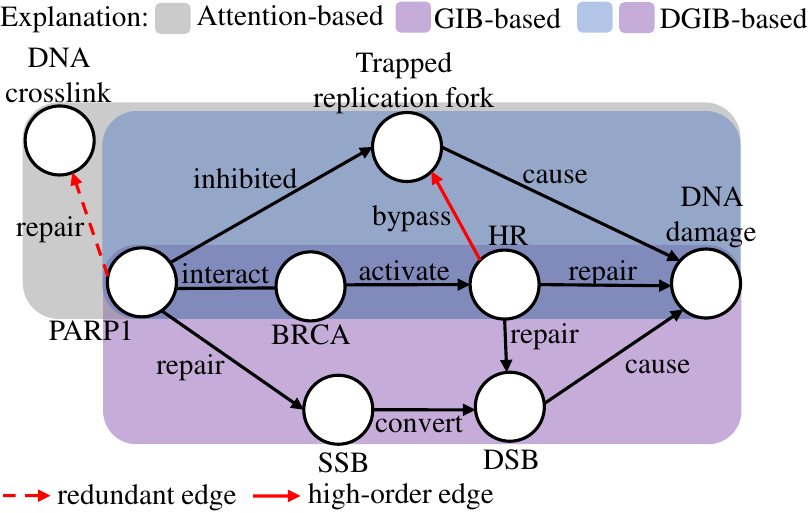}
\caption{Toy example of a knowledge graph with self-loops integrating biological context and relevant mechanisms between the given gene pair BRCA1 and PARP1. The purple and blue subgraphs illustrate mechanisms \chen{where either the conversion of SSBs to DSBs or the blockage of replication forks leads to DNA damage in the absence of HR~\cite{helleday2011underlying}}. The gray subgraph represents the predicted core subgraph of an attention-based method. A GIB-based method  identifies  only one correct subgraph, while our DGIB4SL can find all correct subgraphs (purple and blue). HR, SSB and DSB are abbreviations for ``Homologous Recombination'', ``Single Srand Break'' and ``Double Srand Break'', respectively. The self-loops are not depicted for brevity.}
\label{fig:introduction}
\end{figure}
Synthetic Lethality (SL) is a promising type of genetic interaction where the co-occurrence of two (or more) genetic events leads to cell death, while the occurrence of either event is compatible with cell viability.  
SL has become a cornerstone of anti-cancer drug research, by targeting a gene that is nonessential in normal cells but synthetic lethal with a gene with cancer-specific alterations, which would enable the selective killing of cancer cells without harming normal cells. 
For example, AZD1775, a WEE1 Inhibitor, is based on the SL interaction between WEE1 and p53 mutations~\cite{leijen2016phase}. 
Despite extensive research on SL through high-throughput wet-lab screening methods, these methods often face various challenges, such as high costs and inconsistencies across platforms. 
Thus, predicting SL using computational models becomes highly complementary to wet-lab approaches.

SL prediction approaches can be broadly categorized into statistical inference methods, network-based methods, and supervised machine learning methods. Among these, graph neural networks (GNNs) are currently the most popular model, largely owing to their ability to model complex gene interactions~\cite{wang2021kg4sl}. Although many SL gene pairs have been identified, few of them  have been applied to cancer treatment, as understanding the underlying biological mechanisms remains a critical challenge. 
Unfortunately, most GNNs lack the capability to explain SL mechanisms. 
To address this, methods incorporating  attention mechanisms and knowledge graphs (KGs, a heterogeneous graph containing biological entities and their relationships) have emerged~\cite{wang2021kg4sl,liu2022pilsl,zhang2023kr4sl,zhu2023slgnn}. These approaches enable the identification of crucial edges or semantic features in KGs while predicting SL interaction.

Although KG-based methods with attention mechanisms  improve the interpretability of SL predictions, they still face  three major challenges. 
First, explanations based on the attention mechanisms often lack reliability, since they  tend to assign higher weights to frequent edges and produce unstable explanations across independent runs of the same model~\cite{DBLP:conf/acl/SerranoS19,DBLP:conf/emnlp/WiegreffeP19,DBLP:conf/iclr/BrunnerLPRCW20,DBLP:conf/lrec/GrimsleyMB20,DBLP:journals/tkde/LiSCZYX24}.
As illustrated by the examples in Fig.~\ref{fig:introduction}, the gray subgraph, predicted by attention-based methods, includes a red dashed edge labeled  \chen{``repair''}. This edge, irrelevant to the SL mechanism, \chen{is assigned higher importance due to its frequent occurrence in the KG.}
Second, existing KG-based methods generate only a single core subgraph  to explain  predictions for a given gene pair, even though \chen{multiple subgraphs may provide valid  explanations}~\cite{helleday2011underlying}. As illustrated in Fig.~\ref{fig:introduction}, \chen{the purple subgraph highlights a mechanism where SSB converts to DSB, while the blue subgraph represents replication fork blocking. Both subgraphs explain the SL interaction  between PARP1 and BRCA~\cite{helleday2011underlying}.}
Third, the high-order structures contained in the explanations generated by KG-based methods are often untrustworthy, since the key step of these self-explainable methods, learning gene representation for prediction, cannot capture the information of the interactions between the neighbors (high-order), although the information between a gene and its neighbors can be effectively captured (low-order). 
For instance, as shown in Fig.~\ref{fig:introduction}, the \chen{``DAN damage''} node representation produced by KG-based methods remains unchanged,  regardless of the high-order edge \chen{``HR $\xrightarrow{\text{bypass}}$ Trapped replication fork''}. 
We thus ask: for a gene pair, how to find multiple rather than one faithful core subgraphs and encode their high-order graph information  for prediction?

Our main contribution lies in addressing this question by proposing the  Diverse Graph Information Bottleneck  for Synthetic Lethality (DGIB4SL),  an interpretable GNN model for SL prediction on KGs, \chenn{hinging on the motif-based GNN encoder and our proposed DGIB objective.} First, \chenn{to alleviate instability and the bias toward frequent edges in attention weights, unlike the cross-entropy loss commonly used in attention-based methods, DGIB4SL employs the GIB principle~\cite{DBLP:conf/nips/WuRLL20}}, widely applied in interpretable GNNs~\cite{sun2022graph,miao2022interpretable}, to define a core subgraph from the neighborhood of a gene pair. GIB provides a principled objective function for graph representation learning, determining which data aspects to preserve and  discard~\cite{DBLP:conf/aaai/Pan00021}. However, the standard GIB objective \chenn{identifies only} a single core subgraph for each gene pair. To capture all relevant core subgraphs from the enclosing graph, such as the purple and blue subgraphs in Fig.~\ref{fig:introduction}, we propose the novel Diverse GIB (DGIB) objective function, which incorporates a Determinant Point Process (DPP)~\cite{kulesza2012determinantal} constraint into GIB. \chenn{DPP quantifies diversity by measuring differences between core subgraphs through the determinant of the inner product of their subgraph representations.} 
Second, to encode \chenn{both high-order and low-order pair-wise structural} information from the candidate core subgraphs for prediction, \chenn{DGIB4SL employs a motif-based GNN encoder\cite{chen2023motif}}. Specifically, it uses 13 motif-based adjacency matrices to capture the high-order structure of a gene pair's neighborhood, \chenn{followed by} a GNN with injective concatenation to combine motif-wise representations \chenn{and produce} the final representation of the core graph. 
\chenn{We summarize our key contributions in the following.}
\begin{itemize}
    \item \chenn{We employ the GIB principle to  define a core subgraph, providing a principled alternative to attention weights  which often exhibit instability and bias toward frequent edges.} 
    \item \chenn{We extend the GIB objective to handle data with multiple core subgraphs, resulting in DGIB, which serves as the  objective for our DGIB4SL model.}
    \item \chenn{We use a motif-basd GNN encoder in DGIB4SL to capture both low- and high-order structures in node neighborhoods, ensuring reliable high-order structures in explanations.}
    \item \chenn{Experimental results demonstrate that our DGIB4SL  outperforms state-of-the-art methods in both accuracy and explanation diversity.}
\end{itemize}

\section{Related work}
SL prediction methods can be categorized into three types: statistical inference methods, network-based methods, and supervised machine learning methods~\cite{WangZHZZYDWWZLW22}.
Statistical methods\chenn{~\cite{sinha2017systematic,yang2021mapping,staheli2024predicting,liany2024aster}}, such as \chenn{ASTER~\cite{liany2024aster}}, rely on  predefined biological rules, which limit their applicability in complex systems due to strong underlying assumptions~\cite{DBLP:journals/tcbb/LiuWLLZ20, DBLP:journals/bioinformatics/LianyJR20, DBLP:journals/bmcbi/HuangWLOZ19}.
Network-based approaches\chenn{~\cite{megchelenbrink2015synthetic,ku2020integration,barrena2023synthetic}}, such as \chenn{iMR-gMCSs~\cite{barrena2023synthetic}},  improved reproducibility by analyzing pathway level interactions. However, their performance is often limited by noise and incomplete data.
With advancements in machine learning (ML), supervised techniques such as SVM~\cite{paladugu2008mining} and RF~\cite{li2019identification}, \chenn{and their combination~\cite{dou2024cssldb}} have been developed  to facilitate feature selection using manually crafted biological features. 
However, their dependency on manual feature engineering poses the risk of overlooking critical interactions. SL$^2$MF~\cite{DBLP:journals/tcbb/LiuWLLZ20} advances SL prediction by decomposing SL networks into matrices, offering a structured approach. However, its reliance on linear matrix decomposition struggles to capture the inherent complexity of SL networks. 
To overcome these limitations, deep learning methods\chenn{~\cite{DBLP:journals/bioinformatics/CaiCFWHW20,9947282,lai2021predicting,DBLP:journals/bioinformatics/LongWLZKLL21,DBLP:journals/titb/HaoWFWCL21,DBLP:journals/bioinformatics/LongWLFKCLL22,wang2022nsf4sl,zhang2024mpasl,zhang2024prompt,fan2023multi}} are developed. For example, DDGCN~\cite{DBLP:journals/bioinformatics/CaiCFWHW20}, the first GNN-based model, employs GCNs with dual-dropout to mitigate SL data sparsity. Similarly, 
\chenn{MPASL~\cite{zhang2024mpasl} improves gene representations by capturing SL interaction preferences and layer-wise differences on heterogeneous graphs.}
Although  many SL gene pairs have been identified, few of them  have been applied to cancer treatment. 
Understanding the underlying biological mechanisms is crucial for developing SL-based cancer therapies. Unfortunately, most ML models lack the capability to fully explain SL mechanisms. To address this, methods incorporating prior knowledge into the above models through knowledge graph (KG) have been proposed~\cite{wang2021kg4sl,liu2022pilsl,wang2022nsf4sl,zhang2023kr4sl,zhu2023slgnn}. Most of these methods utilize attention mechanisms to identify important edges~\cite{wang2021kg4sl,liu2022pilsl}, paths~\cite{zhang2023kr4sl}, or factors (subsets of relational features)~\cite{zhu2023slgnn} within KG to explain the  mechanisms underlying SL. For example, KR4SL~\cite{zhang2023kr4sl} encodes structural information, textual semantic information, and sequential semantics to generate gene representations and leverages attention to highlight important edges across hops to form paths as explanations. Similarly, SLGNN~\cite{zhu2023slgnn} focuses exclusively on KG data for factor-based gene representation learning, where relational features in the KG constitute factors, and attention weights are used to identify the most significant ones.
However, attention weights are often unstable, frequently assigning higher weights to frequent edges~\cite{DBLP:journals/tkde/LiSCZYX24}, and typically provide only a single explanation per sample. Additionally, these methods struggle to  capture high-order structures for prediction. To address these issues,  DGIB4SL replaces attention mechanisms with graph information bottlenecks to identify key edges and employs motif-based encoders along with DPP to encode high-order structures and generate multiple explanations. For further details on explainability in GNNs, please refer to our Appendix.

\section{Preliminaries}
\subsection{Notations and problem formulation}
An undirected SL graph is denoted by $G^{\text{SL}} = (\mathcal{V}^\text{SL}, \mathcal{Y}^\text{SL}, X^{\text{SL}})$, with the set of nodes (or genes) $\mathcal{V}^\text{SL}$, the set of edges or SL interactions $\mathcal{Y}^\text{SL}$ and the node feature matrix $X^{\text{SL}}$$\in$$\mathbb R^{|\mathcal{V}^{\text{SL}}|\times d_0}$. 
In addition to the SL interactions, we also have external knowledge about the functions of genes. 
We represent this information as a directed Knowledge Graph (KG) $G^{\text{KG}}$$=$$\{(h, r, t) | h, t \!\!\in\!\! \mathcal{V}^{\text{KG}}, r \!\!\in\!\! \mathcal{R}^{\text{KG}}\}$ and let $X^{\text{KG}}$ denote the node features associated with $G^{\text{KG}}$, where $\mathcal{V}^{\text{KG}}$ is a set of entities, and $\mathcal{R}^{\text{KG}}$ is a set of relations. To achieve the goals outlined later, we define $G$$=$$(A, X, E)$, where $A$ is the adjacency matrix, $X$$\in$$\mathbb R^{|\mathcal{V^{\text{SL}}} \cup \mathcal{V}^{\text{KG}}| \times d_0}$ is the node feature matrix  and $E$$\in$$\mathbb R^{|\mathcal{R}^{\text{KG}} \cup \{\text{``SL''}\}|\times d_1}$ is the edge feature matrix. Graph $G$ represents the directed \emph{joint} \emph{graph} of $G^{\text{SL}}$ and $G^{\text{KG}}$, constructed by mapping genes from $G^{\text{SL}}$ to entities in $G^{\text{KG}}$, and adding edges labeled ``SL''  for corresponding gene pairs based on $G^{\text{SL}}$.  We use $(T)_{ij}$ to represent the element at the $i$-th row and the $j$-th column of a matrix $T$, and $(T)_{i*}$ to represent the $i$-th row of the matrix. A comprehensive list of the mathematical notations used in this paper is provided in Table S1 in the Appendix.

In this paper, we investigate the problem of interpretable synthetic lethality prediction, which aims to extract a local subgraph around two target genes
for link prediction, potentially in an end-to-end fashion. Formally, given the joint graph $G$ that combines the SL graph and the KG, and a  pair of genes $u$ and $v$, we first collect their
$t$-hop neighborhoods from $G$  for each gene, $\mathcal{N}_t(u)$$=$$\{s|\text{dist}(s, u) \le t\}$,  $\mathcal{N}_t(v)$$=$$\{s|\text{dist}(s, v) \le t\}$, where $\text{dist}(\cdot, \cdot)$ denotes the shortest distance between two nodes. We then take the intersection of nodes between their neighborhoods to construct a pairwise \textbf{enclosing graph}~\cite{liu2022pilsl} $G^{uv}$$=$$\{(i, r, j)| i, j$$\in$$\mathcal{N}_t(u) \cap \mathcal{N}_t(v), r\in \mathcal{R}^{\text{KG}}\cup \{\text{``SL''}\}\}$. Our goal is to learn a function $f:$$\{G^{uv}|u,v \!\in\! \mathcal{V}^{\text{SL}}\} \!\to\! \{\widetilde{G}^{uv}|\widetilde{G}^{uv} \subseteq G^{uv}\}$, which maps the enclosing graphs $G^{uv}$ to optimized subgraph $\widetilde G^{uv}$, and learn a binary classifier $g_\theta(\widetilde G^{uv})$ parameterized by $\theta$ for SL prediction based on the optimized subgraph $\widetilde G^{uv}$.

\subsection{Information bottleneck}
In machine learning, it is crucial to determine which parts of the input data should be preserved and which should be discarded. Information bottleneck (IB)~\cite{DBLP:journals/corr/physics-0004057} offers a principled approach for addressing this challenge by compressing the source random variable to keep the information relevant for predicting the target random variable and discarding target-irrelevant information.
\begin{definition} (Information Bottleneck (IB)).
\small
   Given random variables $Q$ and $Y$, the Information Bottleneck principle aims to compress $Q$ to a bottleneck random variable $B$, while keeping the information relevant for predicting $Y$:
   \begin{equation}
       \min_B  \underbrace{-I(Y;B)}_{\text{Prediction}} + \underbrace{\beta I(Q;B)}_{\text{Compression}},
   \end{equation}
where $\beta$$>$$0$ is a Lagrangian multiplier to balance the two
mutual information terms.
\end{definition}

Recently, the IB principle has been applied to learn a bottleneck graph named IB-Graph  for the input graph~\cite{DBLP:conf/nips/WuRLL20}, which keeps \emph{minimal sufficient information} in terms of the graph's data. In our context of SL prediction, the IB-Graph is defined as follows.
\begin{definition} (IB-Graph). 
    For an enclosing graph $G^{uv}$$=$$(A^{uv},$ $X^{uv},$$E^{uv})$ around a pair of genes $u$ and $v$ and the associated label information $Y$, the optimal subgraph $\widetilde G^{uv}$$=$$(\widetilde A^{uv}, \widetilde X^{uv})$ found by Information Bottleneck principle is called  IB-Graph if
    \begin{equation}\label{equ:gib}
    \small
        \widetilde G^{uv} = \underbrace{\arg\min_{\widetilde G^{uv}} -I(Y;\widetilde G^{uv}) + \beta I(G^{uv};\widetilde G^{uv})}_{\text{Graph Information Bottleneck, GIB}},
    \end{equation}
    where $\widetilde A^{uv}$ and $\widetilde X^{uv}$ are the task-relevant adjacency matrix  and the node feature matrix of $\widetilde G^{uv}$, respectively.
\end{definition}
Intuitively, GIB (Eq.~\ref{equ:gib}) aims to learn the core subgraph of the input graph $G^{uv}$, discarding information from $G^{uv}$ by minimizing the mutual information $I(G^{uv}; \widetilde G^{uv})$, while preserving target-relevant information by maximizing the mutual information $I(Y;\widetilde G^{uv})$.

\section{Methods}
\subsection{Overview}
\begin{figure*}[t!]
	\centering
	\includegraphics[width=2\columnwidth]{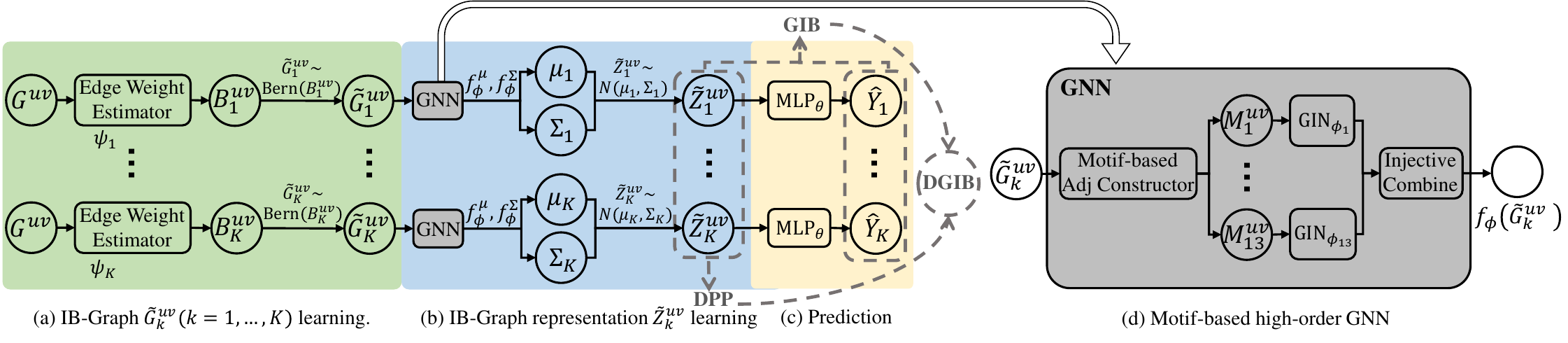} 
	\caption{Overview of DGIB4SL. DGIB4SL takes the enclosing graph data $G^{uv} = (A^{uv}, X^{uv}, E^{uv})$ around genes $u$ and $v$ as inputs,  throughout the phases (a),(b), and (c), and outputs the interaction confidence of the gene pair $(u, v)$ and $K$ IB-graphs  $\widetilde G^{uv}_1$, ..., $\widetilde G^{uv}_K$ that captures the high-order graph structure. 
    \chenn{In phase (a), an IB-graph $\widetilde G^{uv}_k$ is generated by injecting random noise to select important edges, with edge weights \( B^{uv}_k \) estimated from \( G^{uv} \) using  the edge weight estimation module (Eq. S11). $B^{uv}_k$ serves as the parameter for a multi-dimensional Bernoulli distribution, from which an adjacency matrix of $\widetilde G^{uv}_k$ is sampled. 
    In phase (b), IB-graph representations are learned via variational estimation. Each IB graph data $\widetilde G^{uv}_k$ is passed through the same motif-based GNN \( f_\phi \) (Eq. 10) to obtain a distribution from which a representation \( \widetilde Z^{uv}_k \) is sampled. The motif-based GNN, shown in subfigure (d), projects the IB-graph into 13 motif-based matrices. Each motif-based matrix $M^{uv}_k$ is processed by a different GIN encoder to produce motif-wise representations, which are then concatenated (Eq. 15). 
    In phase (c), each IB-graph representation is passed through an MLP-based classifier to make \( K \) predictions (Eq. 11). 
    During training, the representations and predictions are used to compute DPP and GIB, which are jointly optimized in DGIB4SL.}
    } 
\label{fig:model}
\end{figure*}

In this section, we present Diverse Graph Information Bottleneck for Synthetic Lethality (DGIB4SL), an interpretable SL prediction framework that incorporates a diversity constraint using Determinantal Point Process (DPP) and GIB objective to generate multiple explanations called IB-graphs for the same gene pair.
\chen{The framework consists of three key components: IB objective formulation, motif-based DGIB estimation, and prediction.} 
\chen{First}, we introduce our DGIB objective and derive its tractable upper bound. 
\chen{Next}, given that most of the existing IB estimation approaches fail to capture high-order structure information, we propose a novel motif-based DGIB estimation method, which involves three phases: IB-graph learning through random noise injection  to select significant edges, graph representation learning (GRL), and prediction, as shown in Fig.~\ref{fig:model}(a)-(c). 
In the GRL phase, 
we employ the motif-wise representation learning \chen{method}~\cite{chen2023motif} to implement the GNN module in Fig.~\ref{fig:model}(b), enabling the capture of high-order structures in IB-graph, as illustrated  in Fig.~\ref{fig:model}(d).

\subsection{Diverse graph information bottleneck}\label{sec:dgib_frame}
We now present our main result, which demonstrates how to generate $K$$\ge$$1$ different IB-graphs for any gene pair $(u, v)$, denoted as $\{\widetilde G^{uv}_1, ..., \widetilde G^{uv}_K\}$. We first reduce this problem to a special case of the \emph{subset selection problem} where diversity is preferred, i.e., the problem of balancing two aspects: (i) each selected subgraph should satisfy the definition of an IB-Graph; and (ii) the selected subgraphs should be diverse as a group so that the subset is as informative as possible. 

Determinantal Point Process (DPP)~\cite{kulesza2012determinantal} is an elegant and effective probabilistic model designed to address one key aspect of the above problem: diversity. Formally, let ${\mathcal{G}}^{uv}$ denote the set of all possible subgraphs of a graph $G^{uv}$. A point process $P$ defined on the ground set ${\mathcal{G}}^{uv}$ is a probability measure over the power set of ${\mathcal{G}}^{uv}$. 
$P$ is called a DPP if, for any subset $\{\widetilde G^{uv}_1, ..., \widetilde G^{uv}_K\} \subseteq \mathcal{G}^{uv}$, the probability of selecting this subset is given by
\begin{equation}\label{equ:ddp}
\small
    P(\{\widetilde G^{uv}_1, ..., \widetilde G^{uv}_K\}) \propto \text{det}(L^{uv}),
\end{equation}
where $\text{det}(\cdot)$  represents the determinant of a given matrix and  $L^{uv}$$\in$$\mathbb R^{K \times K}$ is a real, positive semidefinite (PSD) matrix and thus \chen{there} exists a matrix $U\in\mathbb R^{K \times d_3}$ such that
\begin{equation}\label{equ:equ_L}
\small
    L^{uv} = UU^{\mathrm{T}}, (U)_{k} = \text{GRL}(\widetilde G^{uv}_k),
\end{equation}
where $(U)_k$$\in$$\mathbb R^{d_3}$ is the graph representation of the $k$-th IB-graph $\widetilde G^{uv}_k$ and $\text{GRL}$ denotes the graph representation learning module. More details about the  $\text{GRL}$ module and intuitions about the ability of Eq.~\ref{equ:ddp} to measure diversity are  provided in Eqs.~\ref{equ:qz}-\ref{equ:u_z} and Appendix, respectively. 

To learn $K$ different subgraphs from the enclosing graph $G^{uv}$ for the gene pair $(u,v)$ that balance diversity with the IB-graph definition, we  introduce the Diverse Graph Information Bottleneck (DGIB) objective function, formulated as follows:
\begin{equation}\label{equ:gib_ddp}
\small
    \min_{\widetilde G^{uv}_1, ..., \widetilde G^{uv}_K} \frac{1}{K}\sum^K_{k=1} \underbrace{-I(Y;\widetilde G^{uv}_k) + \beta_1 I(G^{uv};\widetilde G^{uv}_k)}_{\text{Graph Information Bottleneck}} - \beta_2 \underbrace{\text{det}(L^{uv})}_{\text{DPP}},
\end{equation}
where $\beta_2$$>$$0$ is a Lagrangian multiplier to trade off GIB and DPP terms. Intuitively, the GIB term focuses on learning multiple IB-graphs from the input graph $G^{uv}$, while the DPP term ensures that these IB-graphs are as different as possible.

Due to the non-Euclidean nature of graph data and the intractability of mutual information, it is challenging to optimize the DGIB objective in Eq.~\ref{equ:gib_ddp} directly. Therefore, we adopt the approach of Sun et al.~\cite{DBLP:conf/aaai/Sun0P0FJY22} to derive  tractable variational upper bounds for $-I(Y;\widetilde G^{uv}_k)$ and $I(G^{uv};\widetilde G^{uv}_k)$: 
\begin{equation}\label{equ:gib_ddp2}
\small
\begin{aligned}
    \min_{\widetilde G^{uv}_1, ..., \widetilde G^{uv}_K}  
    &\frac{1}{K}\sum^K_{k=1} \underbrace{\chenn{-\mathbb E_{Y, \widetilde G^{uv}_k} \Big[\log q_\theta(Y|\widetilde G^{uv}_k)\Big]}}_{\text{Upper bound of $-I(Y;\widetilde G^{uv}_k)$}}\\
    &+ \beta_1\frac{1}{K}\sum^K_{k=1} \underbrace{\mathbb E_{G^{uv}} [D_{\text{KL}}(q_\phi(\widetilde G^{uv}_k|G^{uv})||q(\widetilde G^{uv}_k)]}_{\text{Upper bound of $I(G^{uv};\widetilde G^{uv}_k)$}}\\
    &- \beta_2 \text{det}(L^{uv}).
\end{aligned}
\end{equation}
Detailed proof of Eq.~\ref{equ:gib_ddp2}  is given in Appendix.

~\\\textbf{Remark 1:} 
Each explanation or IB-graph generated by DGIB4SL consists of a single core subgraph rather than a combination of multiple core subgraphs. \chennn{In datasets with tens of thousands of gene pairs, as used in our experiments, individual core subgraphs are more likely to be shared across different enclosing graphs than combinations of multiple core subgraphs. This is because the probability of a specific combination being repeatedly shared decreases exponentially with its complexity. In contrast, the structural simplicity of individual core subgraphs makes them more likely to be shared.} Minimizing the compression term in DGIB allows DGIB4SL to select individual core subgraphs with higher \chennn{shared frequency}.

\subsection{High-order motif-based DGIB estimation}\label{sec:dgib_motif}
We now address another key question of this work: how to compute the DGIB upper bound in Eq.~\ref{equ:gib_ddp2} without losing the high-order information, \chen{which is crucial for generating trustworthy explanations. For instance, in a KG, a gene’s functional relevance often  depends on high-order structures, such as cooperative pathways or shared regulatory targets among its neighbors.
Ignoring these structures can result in misleading explanations. 
To overcome this,} we propose a novel high-order motif-based DGIB  estimation method DGIB4SL.

\subsubsection{Mutual information estimation}
We first outline the general procedure for estimating the  DGIB upper bound defined in Eq.~\ref{equ:gib_ddp2}  which is largely analogous to previous work~\cite{tian2020makes,DBLP:conf/aaai/Sun0P0FJY22}. This procedure involves learning the $k$-th IB-graph $\widetilde G^{uv}_k$  from the enclosing graph $G^{uv}$ and driving its representation $\widetilde Z^{uv}_k$$\in$$\mathbb R^{d_3}$ through a graph representation learning function ($\text{GRL}$), such that $\widetilde Z^{uv}_k$$=$$\text{GRL}(\widetilde G^{uv}_k)$, assuming no information is lost during this transformation. Under this assumption, $I(Y;\widetilde Z_k^{uv})$$\approx$$I(Y;\widetilde G_k^{uv})$, $I(G^{uv};\widetilde Z_k^{uv})$$\approx$$I(G^{uv};\widetilde G_i^{uv})$. Consequently,  the  DGIB upper bound, which DGIB4SL aims to minimize, is expressed as:
\begin{equation}\label{equ:gib_ddp3}
\small
\begin{aligned}
    \min_{\widetilde G^{uv}_1, ..., \widetilde G^{uv}_K}  
    -&\frac{1}{K}\sum^K_{k=1} \mathbb E_{Y, \widetilde G^{uv}_k} \Big[\log q_\theta(Y| \widetilde Z^{uv}_k)\Big] - \beta_2 \text{det}(L^{uv})\\
    &+ \beta_1\frac{1}{K}\sum^K_{k=1} \mathbb E_{G^{uv}} [D_{\text{KL}}(q_\phi( \widetilde Z^{uv}_k|G^{uv})||q(\widetilde Z^{uv}_k)]\\
\end{aligned}
\end{equation}
To calculate Eq.~\ref{equ:gib_ddp3}, we follow a two-step process. In \textbf{Step 1}, we estimate a \chenn{IB-graph} 
$\widetilde G_k^{uv}$ \chennn{based on all the subgraphs} from $G^{uv}$. In \textbf{Step 2}, we implement the $\text{GRL}$ function to infer the graph representation $\widetilde Z_k^{uv}$ of $\widetilde G_k^{uv}$ and feed $\widetilde Z_k^{uv}$ into Eq.~\ref{equ:gib_ddp3}.\\
(\textbf{Step 1: IB-graph $\widetilde G_k^{uv}$ Learning}) We compress the information of $G^{uv}$$=$$(A^{uv}, X^{uv}, E^{uv})$ via noise injection to estimate  the $k$-th IB-graph $\widetilde G^{uv}_k$$=$$(\widetilde A^{uv}_k, \widetilde X^{uv}_k)$. To construct $\widetilde A^{uv}_k$, we model  all potential edges of the subgraph as mutually independent Bernoulli random variables. The parameters of these variables are determined by the learned important weights $B_k^{uv}$$\in$$\mathbb R^{|\mathcal{V}^{uv}|\times|\mathcal{V}^{uv}|}$, where $\mathcal{V}^{uv}$ denotes the set of entities in $G^{uv}$:
\begin{equation}\label{equ:AB}
\small
    \widetilde A^{uv}_k = \bigcup_{i,j \in \mathcal{V}^{uv}}\left\{(\widetilde A^{uv}_k)_{i, j} \sim \operatorname{Bernoulli}\left((B_k^{uv})_{i,j}\right)\right\},
\end{equation}
where $(B_k^{uv})_{i,j}$ represents the importance weight or sampling probability for the entity pair $(i,j)$.
The computation of  $B_k^{uv}$ (corresponding  to $\psi_k$ in Figure 1) is jointly  optimized   with relational graph learning, following the approach of  Wang et al.~\cite{wang2019kgat}. Further details are provided in Appendix. To sample the IB-graph, we employ the concrete relaxation~\cite{DBLP:conf/iclr/JangGP17} for the  Bernoulli distribution. Additionally, we construct $\widetilde X_k^{uv}$ to be the same as $X^{uv}$ since no nodes are removed during the construction of $\widetilde A_k^{uv}$. \chenn{An example of the IB-graph construction is provided in the Appendix.}
\\(\textbf{Step 2: IB-Graph Representation Learning and Prediction})  
Using the previously constructed $\widetilde G_k^{uv}$$=$$(\widetilde A_k^{uv}, \widetilde X_k^{uv})$, we compute the prediction, diversity and KL terms in Eq.~\ref{equ:gib_ddp3} by implementing $\widetilde Z_k^{uv}$$=$$\text{GRL}(\widetilde G_k^{uv})$  through variational inference. For the KL term $D_{\text{KL}}(q_\phi( \widetilde Z^{uv}_{k}|G^{uv})||q(\widetilde Z^{uv}_k))$, we treat  the prior $p(\widetilde Z^{uv}_k)$ and the posterior $q_\phi(\widetilde Z^{uv}_k|G^{uv})$ as  parametric Gaussians and thus this  term has an analytic solution:
\begin{equation}\label{equ:qz}
\small
\begin{aligned}
    &p(\widetilde Z_k^{uv}) := N(\mu_0, \Sigma_0),\\
    &q_\phi(\widetilde Z_k^{uv}|G^{uv}) := N(f^\mu_{\phi}(\widetilde A_k^{uv}, \widetilde X_k^{uv}), f^\Sigma_\phi(\widetilde A_k^{uv}, \widetilde X_k^{uv})),
\end{aligned}
\end{equation}
where the outputs  $f^\mu_\phi(\cdot)$$\in$$\mathbb R^{d_3}$ and $f^\Sigma_\phi(\cdot)$$\in$$\mathbb R^{d_3 \times d_3}$ represent  the mean vector and the diagonal covariance matrix of the distribution for the graph embedding $\widetilde Z_k^{uv}$$\in$$\mathbb R^{d_3}$ of $\widetilde G_k^{uv}$, respectively. We model $f_\phi(\cdot)$ as a GNN parameterized by the weights $\phi$ with $2d_3$-dimensional output and a readout or pooing operator. 
The first $d_3$ dimensions of this GNN’s output correspond to $f^\mu_\phi(\cdot)$, while the remaining $d_3$ dimensions correspond to $f^\Sigma_{\phi}(\cdot)$, formally expressed as:
\begin{equation}\label{equ:gnn_graph}
\small
   (f^\mu_{\phi}(\widetilde A_k^{uv}, \widetilde X_k^{uv}), f^\Sigma_{\phi}(\widetilde A_k^{uv}, \widetilde X_k^{uv})) = \mathrm{readout}(\mathrm{GNN}(\widetilde A_k^{uv}, \widetilde X_k^{uv};\phi)).
\end{equation}
We treat $p(\widetilde Z_k^{uv})$$:=$$N(\mathbf{0}, I)$ as a fixed $d_3$-dimensional spherical Gaussian distribution. 
To compute the prediction term $\mathbb E_{Y, \widetilde G^{uv}_k} [\log q_\theta(Y| \widetilde Z^{uv}_k)]$, we adopt the equivalent cross entropy loss function $L^{\mathrm{CE}}$. The conditional distribution  $q_\theta(Y| \widetilde Z^{uv}_k)$ is implemented using a 2-layer perceptron in this work, parameterized by trainable weights  $\theta$, as described below.
\begin{equation}
\small
    -\mathbb E_{Y, \widetilde G_k^{uv}} \Big[\log q_\theta(Y| \widetilde Z_k^{uv})\Big] = L^{\mathrm{CE}}(Y, \mathrm{MLP}(\widetilde Z_k^{uv};\theta)).
\end{equation}
Finally, for the diversity term $\text{det}(L^{uv})= \text{det}(UU^\mathrm{T})$ (Eq.~\ref{equ:gib_ddp3}, Eq.~\ref{equ:equ_L}), the matrix $U$ is constructed by arranging the $K$ IB-graph representations as its rows. Specifically, 
\begin{equation}\label{equ:u_z}
\small
    U = (\widetilde Z_1^{uv}, ..., \widetilde Z_K^{uv})^{\mathrm{T}}.
\end{equation}

\subsubsection{Generating high-order graph representation via motif}\label{sec:motif}
Most methods struggle to satisfy the no-information-loss assumption of the above mutual information estimation framework, since their GNN implementation in Eq.\ref{equ:gnn_graph} often fails to capture the high-order structure of the estimated explanation. Inspired by MGNN~\cite{chen2023motif}, we reduce this to a problem of enhancing the model's representation power beyond the $1$-dimensional Weisfeiler-Leman ($1$-WL) graph isomorphism test~\cite{DBLP:conf/iclr/XuHLJ19}.  
Specifically, the 1-WL test distinguishes graph structures by iteratively compressing node neighborhood information into unique labels, making it a widely recognized tool for evaluating the expressive power of GNNs~\cite{DBLP:conf/iclr/XuHLJ19,maron2019provably,chen2023motif}.

We first formalize three key definitions underlying our approach, starting with the notion of a network motif.
\begin{figure}[t!]
	\centering
	\includegraphics[width=0.8\columnwidth]{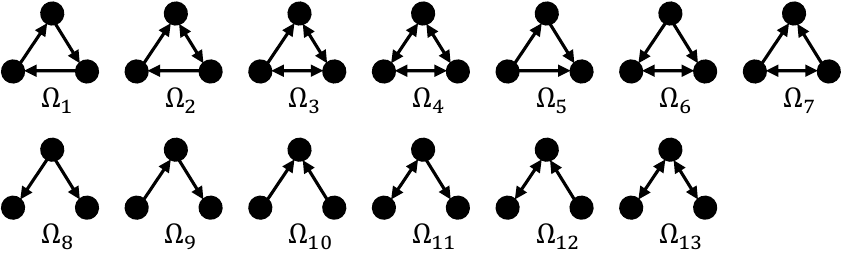} 
	\caption{All 3-node motifs in a directed and unweighted graph.}
	\label{fig:motif}
\end{figure}
\begin{definition}\label{def:motif} (Network motif). 
    A motif is a connected graph of $d_5$ nodes ($d_5 > 1$), with a $d_5 \times d_5$ adjacency matrix $C$ containing binary elements $\{0, 1\}$.
\end{definition}
Let $\Omega_i$ denote different motifs and $C_i$ represent the corresponding associated matrices. An example of all possible 3-node motifs is shown in Fig.\ref{fig:motif}. Chen et al.~\cite{chen2023motif} demonstrated that $3$-node motifs are sufficiently expressive to capture graph structures.   Thus, we only use motifs with $3$ nodes in this work.

\begin{definition} (Motif set). 
    The motif set of a $3$-node motif $\Omega_i$ in a directed graph $G=(A,X)$ is defined by 
    \begin{equation}
    \small
        \mathcal{M}_i = \{ V | V \in \mathcal{V}^3,   A^V = C_i\}, 
    \end{equation}
    where $V$ is a tuple containing 3 node indices and $A^V$ is the $d_5 \times d_5$ adjacency matrix of the subgraph induced by $V$. 
\end{definition}
For example, the motif set of $\Omega_5$ in Fig.~\ref{fig:introduction} can be $\{(\text{HR}, \text{Trapped replication fork}, \text{DNA damage}),$  $(\text{HR}, \text{DSB}, \text{DNA}$ $\text{damage})\}$. Based on the motif set, we define the operator $\text{set}(\cdot)$ to transform an ordered tuple into an unordered set, e.g., $\mathrm{set}((v_1, ..., v_3))$$=$$\{v_1, ..., v_3\}$. Using this operator, the motif-based adjacency matrix is defined as follows.
\begin{definition} (Motif-based adjacency matrix). For a graph $G=(A,X)$, 
     a motif-based adjacency matrix $M_i$ of $G$ in terms of a given motif $\Omega_i$ is defined by
     \begin{equation}
     \small
         (M_i)_{j,l} = \sum_{V \in \mathcal{M}_i} \mathbb I( \{j,l\}\subset \mathrm{set}(V)).
     \end{equation}
\end{definition}
Intuitively, $(M_i)_{j,l}$ denotes the number of times  nodes $j$ and $l$ are connected through an element of the given motif set $\mathcal{M}_i$. The roles of these definitions will be discussed in Eq.~\ref{equ:motif_rst}.

To generate graph embeddings with greater expressive power than the $1$-WL test, Chen et al.\cite{chen2023motif} demonstrated that associating node or graph embeddings with different motif structures and combining these embeddings using an injective function effectively captures high-order \chenn{and low-order} graph structure.
Specifically, we use a 2-layer GIN\cite{DBLP:conf/iclr/XuHLJ19} as the underlying GNN  and employ different GINs to encode the structure of 13 motifs in $\widetilde G^{uv}$, producing node embeddings through motif-based adjacency matrices. Then, the motif-wise embeddings are  combined via injective concatenation.   Mathematically, we construct the GNN module in  Eq.\ref{equ:gnn_graph} as 
\begin{equation}\label{equ:motif_rst}
\small
    \mathrm{GNN}(\widetilde A_k^{uv}, \widetilde X_k^{uv};\phi) := \|_{i=1}^{13} \mathrm{GIN}( M^{uv}_{i}, \widetilde X^{uv}_{k};\phi_i),
\end{equation}
where $M^{uv}_i$ are the motif-based adjacency matrix of $\widetilde G^{uv} = (\widetilde A_k^{uv}, \widetilde X_k^{uv})$ in terms of a given motif $\Omega_i$ and  $\|$ denotes a concatenation function. 
In summary, Eq.~\ref{equ:motif_rst} preserves high-order \chenn{and low-order pair-wise} structural information when calculating DGIB, enhancing the reliability of the high-order structure in an IB-graph.

\section{Results}
\subsection{Experimental setup}
\subsubsection{Datasets and baselines}
To evaluate the effectiveness of our  DGIB4SL, we utilized the dataset provided by the Synthetic Lethality Benchmark (SLB)~\cite{feng2023benchmarking}. The dataset is collected from  SynLethDB 2.0, a comprehensive repository of SL data, and includes 11 types of entities and 27 relationships. It contains  35,913 human SL gene pairs involving 9,845 genes, along with a KG named SynLethKG, which comprises 54,012 nodes and 2,233,172 edges. Additional details on SynLethKG can be found in Tables S2--S3 in Appendix.

We evaluated two categories of methods, selecting thirteen recently published methods. These include three matrix factorization (MF)-based methods: GRSMF~\cite{DBLP:journals/bmcbi/HuangWLOZ19}, SL2MF~\cite{DBLP:journals/tcbb/LiuWLLZ20}, and CMFW~\cite{DBLP:journals/bioinformatics/LianyJR20}, and ten GNN-based methods: DDGCN~\cite{DBLP:journals/bioinformatics/CaiCFWHW20}, GCATSL~\cite{DBLP:journals/bioinformatics/LongWLZKLL21}, SLMGAE~\cite{DBLP:journals/titb/HaoWFWCL21}, MGE4SL~\cite{lai2021predicting}, PTGNN~\cite{DBLP:journals/bioinformatics/LongWLFKCLL22}, KG4SL~\cite{wang2021kg4sl}, PiLSL~\cite{liu2022pilsl}, NSF4SL~\cite{wang2022nsf4sl}, KR4SL~\cite{zhang2023kr4sl}, and SLGNN~\cite{zhu2023slgnn}. 
Among these, KG4SL, PiLSL, NSF4SL, KR4SL, and SLGNN integrate knowledge graphs (KGs) into the generation of node representations. Detailed descriptions of these baselines can be found in Appendix.

\subsubsection{Implementation details}
We evaluated our method using 5-fold cross-validation by splitting the gene pairs and using 4 ranking metrics: Normalized Discounted Cumulative Gain (NDCG@\chenn{C}), Recall@\chenn{C}, Precision-@\chenn{C}, and Mean Average Precision (MAP@\chenn{C}). NDCG@\chenn{C} measures the positioning of known SL gene pairs within the model's predicted list, while Recall@\chenn{C} and Precision@\chenn{C} assess the model’s ability to identify relevant content and rank the top \chenn{C} results accurately, respectively. MAP@\chenn{C} provides a comprehensive evaluation by combining precision and ranking across multiple queries, averaging the precision at each relevant prediction up to the \chenn{C}-th position. In this study, we evaluated these metrics using the top \chenn{C}=10 and top \chenn{C}=50 predictions. The coefficients $\beta_1$ and $\beta_2$ in Eq.~\ref{equ:gib_ddp2} were set $\beta_1$$=$$\beta_2$$=$$10^{-4}$.  More details on data preprocessing,  hyperparameters settings for DGIB4SL and baseline implementations are provided  in  Appendix.

\begin{table*}[t!]
  \centering
  \caption{Performance of various methods in terms of NDCG and Recall under 5-fold cross-validation. The best results are highlighted in bold, while the second-best results are underlined. \chenn{Values in parentheses indicate paired $t$-test $p$-values comparing baselines with DGIB4SL.}}
    \begin{tabular}{lllll}
    \toprule
          & NDCG@10  & NDCG@50 & Recall@10 & Recall@50\\
    \midrule
    GRSMF~\cite{DBLP:journals/bmcbi/HuangWLOZ19} &   0.2844 \chenn{($2.06 \times 10^{-7}$)}     &   0.3153 \chenn{($9.20 \times 10^{-9}$)}    &  0.3659 \chenn{($8.04 \times 10^{-7}$)}  & 0.4460 \chenn{($3.07 \times 10^{-4}$)}\\
    SL$^2$MF~\cite{DBLP:journals/tcbb/LiuWLLZ20}   &   0.2807 \chenn{($7.79 \times 10^{-8}$)}     &   0.3110 \chenn{($7.49 \times 10^{-8}$)}    &  0.2642 \chenn{($8.44 \times 10^{-7}$)}  & 0.3401 \chenn{($2.95 \times 10^{-7}$)}\\
    CMFW~\cite{DBLP:journals/bioinformatics/LianyJR20}   &   0.2390 \chenn{($1.24 \times 10^{-7}$)}    &   0.2744 \chenn{($7.16 \times 10^{-7}$)}    &  0.3257 \chenn{($2.15 \times 10^{-6}$)} & 0.4097 \chenn{($4.14 \times 10^{-5}$)} \\
    \midrule
    DDGCN~\cite{DBLP:journals/bioinformatics/CaiCFWHW20} &  0.1568 \chenn{($1.94 \times 10^{-7}$)}     &   0.1996 \chenn{($6.74 \times 10^{-8}$)}    &  0.2379 \chenn{($7.61 \times 10^{-9}$)}  & 0.3447 \chenn{($2.76 \times 10^{-5}$)} \\
    GCATSL~\cite{DBLP:journals/bioinformatics/LongWLZKLL21} &  0.2642 \chenn{($3.99 \times 10^{-7}$)}    &   0.2976 \chenn{($1.24 \times 10^{-6}$)}    &  0.3363 \chenn{($5.91 \times 10^{-6}$)} & 0.4203 \chenn{($3.67 \times 10^{-4}$)} \\
    SLMGAE~\cite{DBLP:journals/titb/HaoWFWCL21} &  0.2699 \chenn{($6.63 \times 10^{-7}$)}     &   0.3160 \chenn{($1.19 \times 10^{-6}$)}    &  0.3198 \chenn{($1.26 \times 10^{-5}$)}  & 0.4421 \chenn{($1.26 \times 10^{-3}$)}\\
    MGE4SL~\cite{lai2021predicting} &  0.0028 \chenn{($2.66 \times 10^{-9}$)}     &   0.0071 \chenn{($4.89 \times 10^{-9}$)}     &  0.0020 \chenn{($1.35 \times 10^{-8}$)} & 0.0085 \chenn{($6.13 \times 10^{-9}$)} \\
    PTGNN~\cite{DBLP:journals/bioinformatics/LongWLFKCLL22} &  0.2358 \chenn{($1.07 \times 10^{-7}$)}    &   0.2740 \chenn{($6.66 \times 10^{-7}$)}    &  0.3361 \chenn{($9.43 \times 10^{-6}$)}  & 0.4323 \chenn{($1.96 \times 10^{-4}$)}\\
    \midrule
    KG4SL~\cite{wang2021kg4sl} &  0.2505 \chenn{($1.04 \times 10^{-6}$)}    &   0.2853 \chenn{($1.05 \times 10^{-7}$)}     &  0.3347 \chenn{($4.40 \times 10^{-7}$)}  & 0.4253 \chenn{($2.48 \times 10^{-5}$)}\\
    PiLSL~\cite{liu2022pilsl} &  \underline{0.5166} \chenn{($3.32 \times 10^{-5}$)}    &   0.5175 \chenn{($3.28 \times 10^{-5}$)}    &  0.3970 \chenn{($3.99 \times 10^{-6}$)} & 0.4021 \chenn{($3.82 \times 10^{-6}$)}\\
    NSF4SL~\cite{wang2022nsf4sl} &  0.2279 \chenn{($2.65 \times 10^{-6}$)}    &   0.2706 \chenn{($4.50 \times 10^{-7}$)}     &  0.3526 \chenn{($2.96 \times 10^{-5}$)} & \underline{0.4624} \chenn{($1.14 \times 10^{-3}$)}\\
    KR4SL~\cite{zhang2023kr4sl} &  0.5105 \chenn{($2.19 \times 10^{-5}$)}    &   \underline{0.5248} \chenn{($5.79 \times 10^{-5}$)}    &  \underline{0.4131} \chenn{($6.97 \times 10^{-6}$)} & 0.4135 \chenn{($5.62 \times 10^{-6}$)}\\
    SLGNN~\cite{zhu2023slgnn} &  0.1468 \chenn{($4.49 \times 10^{-8}$)}     &   0.2004 \chenn{($1.85 \times 10^{-7}$)}    &  0.2154 \chenn{($1.90 \times 10^{-7}$)} & 0.3717 \chenn{($1.26 \times 10^{-4}$)}\\
    \midrule
    DGIB4SL  &  \textbf{0.5760}     &   \textbf{0.5766}    &  \textbf{0.5233}  & \textbf{0.5280}\\
    \bottomrule
    \end{tabular}%
  \label{tab:performance}%
\end{table*}%
\begin{table*}[t!]
  \centering
  \caption{Performance of various methods in terms of Precision and MAP under 5-fold cross-validation. The best results are highlighted in bold, while the second-best results are underlined. \chenn{Values in parentheses indicate paired $t$-test $p$-values comparing baselines with DGIB4SL.}}
    \begin{tabular}{lllll}
    \toprule
          & Precision@10 & Precision@50 & MAP@10 & MAP@50\\
    \midrule
    GRSMF~\cite{DBLP:journals/bmcbi/HuangWLOZ19} &   0.3683 \chenn{($1.02 \times 10^{-5}$)}     &   0.4461 \chenn{($1.42 \times 10^{-4}$)}     &  0.2568 \chenn{($7.15 \times 10^{-7}$)}  & 0.2521 \chenn{($9.92 \times 10^{-8}$)} \\
    SL$^2$MF~\cite{DBLP:journals/tcbb/LiuWLLZ20}   &   0.2694 \chenn{($3.53 \times 10^{-7}$)}     &   0.3407 \chenn{($3.45 \times 10^{-6}$)}     &  0.2861 \chenn{($1.35 \times 10^{-7}$)}  & 0.2769 \chenn{($1.68 \times 10^{-7}$)} \\
    CMFW~\cite{DBLP:journals/bioinformatics/LianyJR20}&   0.3267 \chenn{($8.57 \times 10^{-7}$)}     &   0.4098 \chenn{($4.37 \times 10^{-5}$)}     &  0.2043 \chenn{($5.65 \times 10^{-7}$)}  & 0.2069 \chenn{($1.13 \times 10^{-6}$)} \\
    \midrule
    DDGCN~\cite{DBLP:journals/bioinformatics/CaiCFWHW20}&   0.2385 \chenn{($1.60 \times 10^{-6}$)}     &   0.3447 \chenn{($1.41 \times 10^{-5}$)}    &  0.1280 \chenn{($6.26 \times 10^{-9}$)}  & 0.1321 \chenn{($3.74 \times 10^{-8}$)} \\
    GCATSL~\cite{DBLP:journals/bioinformatics/LongWLZKLL21}&   0.3372 \chenn{($5.36 \times 10^{-6}$)}     &   0.4204 \chenn{($1.03 \times 10^{-3}$)}      &  0.2354 \chenn{($8.96 \times 10^{-7}$)}   & 0.2382 \chenn{($7.66 \times 10^{-8}$)}  \\
    SLMGAE~\cite{DBLP:journals/titb/HaoWFWCL21}&   0.3222 \chenn{($3.58 \times 10^{-6}$)}      &   0.4422 \chenn{($7.76 \times 10^{-4}$)}      &  0.2514 \chenn{($2.10 \times 10^{-7}$)}   & 0.2469 \chenn{($9.86 \times 10^{-7}$)}  \\
    MGE4SL~\cite{lai2021predicting}&   0.0022 \chenn{($1.31 \times 10^{-8}$)}      &   0.0085 \chenn{($8.87 \times 10^{-9}$)}      &  0.0018 \chenn{($3.17 \times 10^{-9}$)}   & 0.0024 \chenn{($1.58 \times 10^{-8}$)}  \\
    PTGNN~\cite{DBLP:journals/bioinformatics/LongWLFKCLL22}&   0.3372 \chenn{($2.26 \times 10^{-6}$)}      &   0.4324 \chenn{($1.48 \times 10^{-4}$)}      &  0.1948 \chenn{($3.46 \times 10^{-7}$)}   & 0.1975 \chenn{($1.80 \times 10^{-7}$)}  \\
    \midrule
    KG4SL~\cite{wang2021kg4sl}&   0.3357 \chenn{($4.68 \times 10^{-6}$)}      &   0.4254 \chenn{($1.43 \times 10^{-4}$)}      &  0.2175 \chenn{($1.50 \times 10^{-6}$)}   & 0.2208 \chenn{($1.04 \times 10^{-6}$)}  \\
    PiLSL~\cite{liu2022pilsl}&   0.4098 \chenn{($3.71 \times 10^{-6}$)}     &   0.4035 \chenn{($3.84 \times 10^{-6}$)}     &  0.5153 \chenn{($4.95 \times 10^{-4}$)}  & 0.5149\chenn{($2.40 \times 10^{-3}$)} \\
    NSF4SL~\cite{wang2022nsf4sl}&   0.3563 \chenn{($1.34 \times 10^{-5}$)}     &   0.4626 \chenn{($2.69 \times 10^{-4}$)}     &  0.1881 \chenn{($1.85 \times 10^{-6}$)}  & 0.1818 \chenn{($3.65 \times 10^{-6}$)} \\
    KR4SL~\cite{zhang2023kr4sl}&   \underline{0.4845} \chenn{($1.50 \times 10^{-4}$)}     &   \underline{0.4901} \chenn{($4.75 \times 10^{-4}$)}     &  \underline{0.5175} \chenn{($6.82 \times 10^{-4}$)}  & \underline{0.5200} \chenn{($5.33 \times 10^{-3}$)} \\
    SLGNN~\cite{zhu2023slgnn}&  0.2172 \chenn{($1.86 \times 10^{-6}$)}     &   0.3718 \chenn{($5.13 \times 10^{-5}$)}    &  0.1259 \chenn{($1.16 \times 10^{-7}$)}  & 0.1252 \chenn{($2.34 \times 10^{-8}$)}\\
    \midrule
    DGIB4SL  &   \textbf{0.5359}    &   \textbf{0.5294}     &  \textbf{0.5485}  & \textbf{0.5484} \\
    \bottomrule
    \end{tabular}%
  \label{tab:performance}%
\end{table*}%

\subsection{Performance evaluation}
We evaluated the empirical performance of DGIB4SL against  state-of-the-art baselines, as summarized in Table~\ref{tab:performance}. 
Baseline performance was referenced from the public leaderboards provided by SLB~\cite{feng2023benchmarking}, except for KR4SL, which was based on our experimental results. As shown in Table~\ref{tab:performance}, DGIB4SL consistently outperformed all  baselines on the SynLethDB 2.0 dataset~\cite{wang2022synlethdb}. Specifically, 
KR4SL achieved the second best performance on  NDCG@50, Recall@10,  Precision@10,  Precision@50, MAP@10 and MAP@50, while PiLSL and NSF4SL achieved the second best performance on NDCG@10 and  Recall@50, respectively.
Our DGIB4SL further improved over KR4SL by 9.9$\%$, 26.7$\%$, 10.6$\%$, 8.0$\%$, 6.0$\%$ and 5.5$\%$ on NDCG@50, Recall@10,  Precision@10,  Precision@50, MAP@10 and MAP@50, respectively,  and outperformed PiLSL and NSF4SL  by 11.5$\%$ and 14.2$\%$ in  NDCG@10 and Recall@50, respectively. From these results, we draw the following conclusions: (i) the competitive performance of   DGIB4SL and KG-based baselines highlights the value of KGs in providing biological context  for gene related label prediction. 
(ii) The integration of motifs significantly enhances model performance by expanding the receptive field and encoding high-order edges into predictions effectively.

\subsection{\chenn{Explanation evaluation}}
\subsubsection{Qualitative analysis}
\begin{figure}[hbtp]
\centering
\includegraphics[width=1.01\columnwidth]{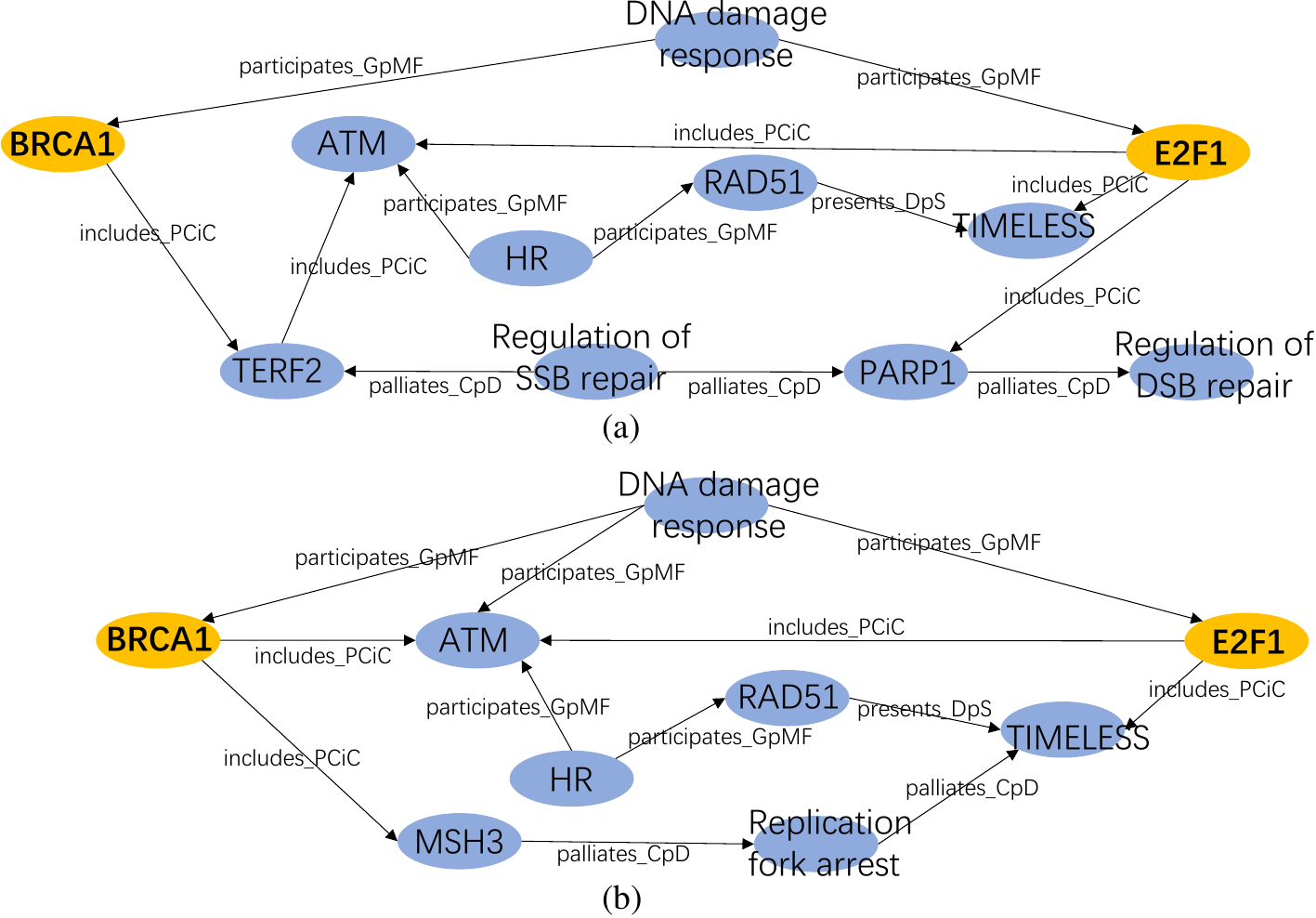}
\caption{Two explanations  learned from our DGIB4SL, provide different insights into the biological mechanisms underlying SL of the same gene pair (BRCA1, E2F1). For details on the edge nomenclature, please refer to Table S4 in Appendix.}
\label{fig:case_study}
\end{figure}

Leveraging the DGIB mechanism (Eq.~\ref{equ:gib_ddp2}), our DGIB4SL not only predicts SL interactions but also provides $K \ge 1$ \chen{explanations} that reveal the biological mechanisms underlying the  predictions for the same gene pair. For this case study, we selected the SL pair BRCA1 and \chen{E2F1} from the test data, where the predicted interaction between BRCA1 and \chen{E2F1} matched the actual label. 
To remove unimportant edges from the enclosing core graphs of (BRCA1, \chen{E2F1}), we applied edge sampling probabilities with thresholds 0.58 and 0.76 for the first and second core subgraphs distribution (Eq.~\ref{equ:AB}), respectively. Edges with probabilities  exceeding these thresholds ($(B^{uv}_1)_{i,j}$$>$$0.58$, $(B^{uv}_2)_{i,j}$$>$$0.76$) were retained. The filtered core graphs are shown in Fig.~\ref{fig:case_study} (a) and (b).

We first analyzed the first core subgraph (Fig.~\ref{fig:case_study}(a)).  The first core subgraph  highlights two key mechanisms of SL between BRCA1 and E2F1:
(1) HR Deficiency Due to BRCA1 Mutation: Pathway ``BRCA1 $\xrightarrow{\text{includes\_PCiC}}$TERF2$\xrightarrow{\text{includes\_PCiC}}$ ATM $\xleftarrow{\text{participates\_GpMF}}$ HR$\xrightarrow{\text{participates\_GpMF}}$RAD51$\xrightarrow{\text{presents\_DpS}}$ TIMELESS'' indicates that BRCA1 mutation inactivates the HR pathway. This leaves double-strand breaks (DSBs), converted from unresolved single-strand breaks (SSBs), unrepaired.
(2) SSB Repair Pathway Blockage: The pathways ``E2F1$\xrightarrow{\text{includes\_PCiC}}$ PARP1$\xrightarrow{\text{palliates\_CpD}}$ Regulation of DSB repair'' and ``Regulation of SSB repair$\xrightarrow{\text{palliates\_CpD}}$ PARP1'' demonstrate that E2F1 mutation weakens both SSB and DSB repair functions. These combined defects in SSB repair and HR result in unrepairable DNA damage, genomic instability, and ultimately cell death. Previous studies~\cite{choi2019e2f1} have shown that E2F1 depletion impairs HR, disrupting DNA replication and causing DNA damage, further supporting these findings.

We then analyzed the second core subgraph (Fig.~\ref{fig:case_study}(b)). The second core subgraph identifies a different mechanism, centered around replication fork blockage, while maintaining the shared premise of homologous recombination repair pathway loss. Specifically, the pathway ``E2F1$\xrightarrow{\text{includes\_PCiC}}$TIMELESS $\xleftarrow{\text{palliates\_CpD}}$Replication Fork Arrest'' reveals that E2F1 mutation destabilizes replication forks, leading to stalled replication. TIMELESS, a downstream target of E2F1, plays a critical role in stabilizing replication forks during DNA replication stress.
\begin{table}[htbp]
  \centering
  \caption{\chenn{Comparison of attention weights in KG-Based SL prediction methods with explanations in DGIB4SL in terms of Fidelity, Sparsity, and Diversity. Symbols $\uparrow$ and $\downarrow$ respectively represent that larger and smaller metric values are better. Values in parentheses indicate paired $t$-test $p$-values against DGIB4SL.}}
  \scriptsize
    \begin{tabular}{llll}
    \toprule
          & \chenn{Infidelity$\downarrow$}  & \chenn{Sparseness$\uparrow$} & \chenn{DPP$\uparrow$} \\
    \midrule
    \chenn{KG4SL} & \chenn{$4.0\!\times\!10^5(2\!\times\!10^{-4})$}     & \chenn{0.330($8\!\times\!10^{-4}$)}     & \chenn{-} \\
    \chenn{PiLSL} & \chenn{$5.7\!\times\!10^5(9\!\times\!10^{-3})$}     & \chenn{0.340($4\!\times\!10^{-4}$)}    & \chenn{-} \\
    \chenn{SLGNN} & \chenn{$1.8\!\times\!10^6(1\!\times\!10^{-9})$}     & \chenn{0.120($6\!\times\!10^{-5}$)}    & \chenn{\underline{1.59}($8\!\times\!10^{-4}$)} \\
    \chenn{KR4SL} & \chenn{$\underline{5.8\!\times\!10^4} (2\!\times\!10^{-3})$}    & \chenn{\underline{0.352}($7\!\times\!10^{-4}$)}    & \chenn{-} \\
    \chenn{KR4SL$^*$} & \chenn{$\underline{5.8\!\times\!10^4} (1\!\times\!10^{-3})$}    &  \chenn{0.326($5\!\times\!10^{-4}$)}   & \chenn{0.48($7\!\times\!10^{-4}$)} \\
    \midrule
    \chenn{DGIB4SL} & \chenn{$\textbf{6.2}\!\times\!\textbf{10}^\textbf{3}$}    & \chenn{\textbf{0.463}}    & \chenn{\textbf{1.67}}\\
    \bottomrule
    \end{tabular}%
  \label{tab:explanation}%
\end{table}%

\subsubsection{\chenn{Quantitative analysis}}
\chenn{We evaluated the Infidelity~\cite{yeh2019fidelity} and Sparseness~\cite{chalasani2020concise} (see Appendix for these metrics descriptions) and used the DPP to evaluate the diversity of explanations generated by DGIB4SL and other  explainable SL prediction methods, including KG4SL, PiLSL, SLGNN, and KR4SL. To compare  diversity, we introduced KR4SL$^*$, a variant of KR4SL with a multi-head attention mechanism, since the explainable baselines (except for SLGNN) generate a single explanation using similar attention mechanisms. As shown in Table~\ref{tab:explanation}, DGIB4SL outperforms other methods in terms of Infidelity, Sparseness, and DPP. We draw the following conclusions:}
\begin{itemize}
    \item \chenn{\textbf{Diversity}: Despite using multi-head attention, KR4SL$^*$ showed lower DPP values, indicating that multi-head attention alone has a limited capacity for generating diverse explanations. SLGNN's DPP performance is competitive, due to the inclusion of a distance correlation regularizer that encourages independent factor embeddings, indirectly enhancing diversity.}
    \item \chenn{\textbf{Sparsity}: Baselines employing similar attention mechanisms showed comparable Sparseness values, except for SLGNN, which directly uses learnable weights to estimate the importance of different relational features.}
    \item \chenn{\textbf{Fidelity}: The Infidelity of attention-based  methods is relatively low, possibly due to the inherent instability and high-frequency biases of attention mechanisms~\cite{DBLP:conf/acl/SerranoS19,DBLP:conf/emnlp/WiegreffeP19}.}
\end{itemize}

\subsection{Model analysis}
\subsubsection{Ablation study}

As illustrated in Fig.~\ref{fig:model}, the DGIB (Eq.~\ref{equ:gib_ddp2}), DPP constraint (third line in  Eq.~\ref{equ:gib_ddp2}) and motif-based graph encoder (Eq.~\ref{equ:motif_rst}) are key components of DGIB4SL. Based on these, we derived the following variants for the ablation study: 
(1) DGIB4SLw/oM: DGIB4SL without  motif information, to assess the impact of motifs.
(2) DGIB4SLw/oB: DGIB4SL without the DGIB objective (replacing it with an attention mechanism); 
(3) DGIB4SLw/oP: DGIB4SL without the DPP constraint (essentially reducing the objective to GIB). 
To evaluate the contributions of motifs, DGIB and DPP, we compared DGIB4SL against these variants. As shown in Fig.~\ref{fig:ablation_study}, DGIB4SL consistently outperformed DGIB4SLw/oM across all metrics, highlighting the importance of incorporating high-order structures through motifs in SL prediction.  
Secondly, the performance of DGIB4SLw/oB was comparable to DGIB4SL on all metrics. This result is expected, since DGIB4SLw/oB can still extract label-related input information via attention mechanisms, even if this information may not always faithfully reflect the model's behavior. 
Thirdly, DGIB4SLw/oP also achieved  comparable performance to DGIB4SL. This is intuitive since, without DPP constraints, DGIB4SLw/oP may find $K$ similar explanations, which could overlap with one of the $K$ different explanations found by DGIB4SL. 
\chen{To further compare their explanations, we evaluated their diversity using the DPP measure, calculated as the determinant of $\widetilde G^{uv}_1$, ..., $\widetilde G^{uv}_K$ (Eqs.~\ref{equ:ddp}-\ref{equ:equ_L}). As shown in the two rightmost columns of Fig.~\ref{fig:ablation_study}, DGIB4SL produced significantly more diverse explanations compared to DGIB4SLw/oP.}

\begin{figure}[!t]
	\centering
	\includegraphics[width=0.9\columnwidth]{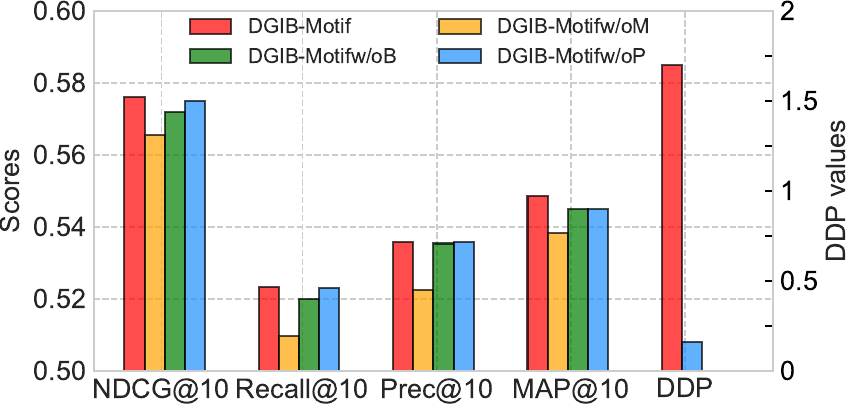} 
	\caption{\chen{Ablation study of DGIB4SL for Motif, DGIB, and DPP on NDCG@10, Recall@10, Precision@10, MAP@10 (left Y-axis) and one diversity metric DPP (right Y-axis).}}\label{fig:ablation_study}
\end{figure}

\subsubsection{Convergence analysis}
\begin{figure}[hbtp]
\centering
\includegraphics[width=1.01\columnwidth]{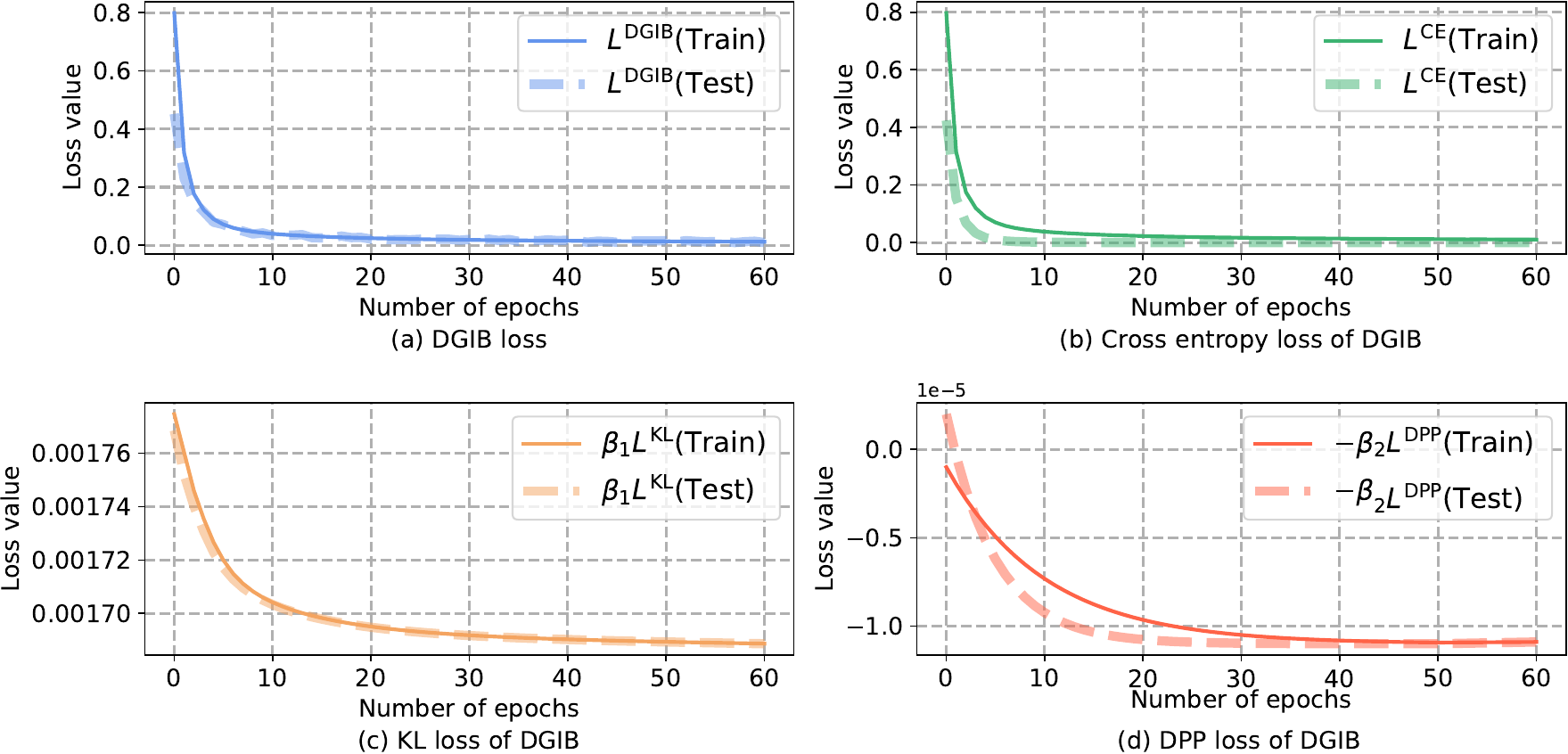}
\caption{Convergence of DGIB4SL: (a) learning curve of DGIB and (b)-(d): learning curve of each component of the loss of DGIB4SL.}
\label{fig:loss}
\end{figure}
In this section, we analyze the convergence behavior of DGIB4SL. For clarity, we rewrite the DGIB objective function in Eq.~7 as $L^{\text{DGIB}}$$=$$L^{\text{CE}} + \beta_{1} L^{\text{KL}} - \beta_{2} L^{\text{DPP}}$, where $L^{\text{CE}}$ is the binary cross-entropy loss, $\beta_{1} L^{KL}$ is the KL-divergence loss, and $-\beta_{2} L^{\text{DPP}}$ represents the DPP loss.  Figure \ref{fig:loss} illustrates the convergence trends of each component of the DGIB objective. Solid lines correspond to training set values, while dashed lines represent testing set values. As shown in Fig.~\ref{fig:loss}(a)-(b), both $L^{\text{DGIB}}$ and $L^{\text{CE}}$ experienced a steep decline during the initial epochs, with minimal separation between training and testing curves. This indicates rapid learning and effective generalization by the model. In Fig.~\ref{fig:loss}(c), $L^{\text{KL}}$ shows negligible differences between training and testing curves, suggesting that the compressed input information allows the model to generalize effectively on the test set. In contrast, Fig.~\ref{fig:loss}(d) highlights that $L^{\text{DPP}}$ initially exhibits a more pronounced gap between training and testing curves. However, this gap narrows over time, demonstrating that the model learns diverse representations effectively, albeit at a slower pace compared to other loss components.

\subsubsection{Parameter sensitivity}
\begin{figure}[h]
	\centering
	\includegraphics[width=0.95\columnwidth]{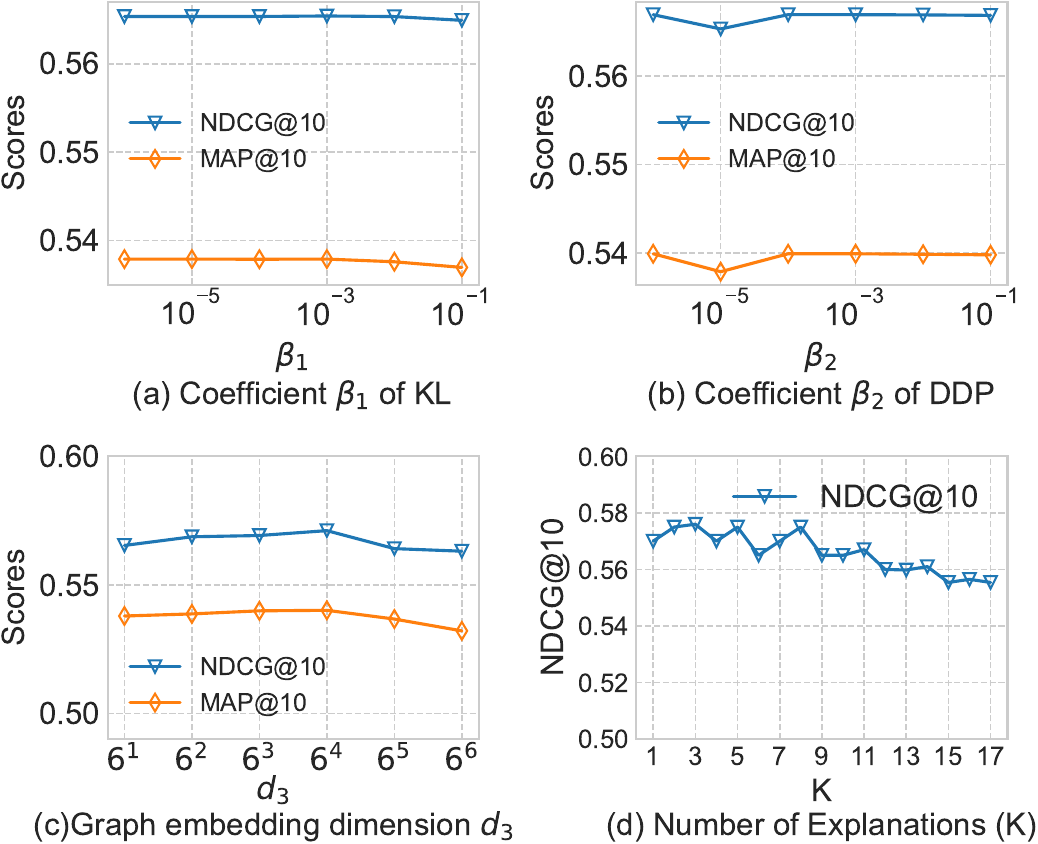} 
	\caption{Parameter sensitivity analysis for DGIB4SL on Lagrangian multipliers $\beta_1$, $\beta_2$, graph embedding dimension $d_3$ \chenn{and the number of explanations $K$ generated for each gene pair on performance}.}\label{fig:sensitivity}
\end{figure}
We explored the impact of the Lagrangian multipliers $\beta_1$ and  $\beta_2$ in Eq.~\ref{equ:AB}, the graph representation dimension $d_3$ in Eq.~\ref{equ:qz}, \chenn{and the number of explanations $K$ generated by DGIB4SL for each gene pair} on SL prediction performance. The performance trend is shown in Fig.~\ref{fig:sensitivity}. From the results, we observed the following:
(1) As illustrated in Fig.~\ref{fig:sensitivity}(a)-(b), DGIB4SL's performance  was relatively insensitive to $\beta_1$ and $\beta_2$. Specifically, $\beta_1$ values in the range $[10^{-6}, 10^{-3}]$ and $\beta_2$ values in the range $[10^{-4}, 10^{-2}]$ typically yielded robust and reliable performance. For most cases, $\beta_1$$=$$ \beta_2$$=$$10^{-4}$ proved to be an optimal choice.
(2) Fig.~\ref{fig:sensitivity}(c) indicates that increasing $d_3$ gradually improved performance, peaking around  $d_3$$=$$6^4$, but began to decline afterward, likely due to overfitting. As the performance gain beyond $d_3$$=$$6$  was modest, we opted for $d_3$$=$$6$  to simplify the model while maintaining strong performance. 
(3) \chenn{Fig.~\ref{fig:sensitivity}(d) shows the impact of $K$ within the approximate range of $[1, 17]$ (heuristically estimated; see Appendix for details). The results indicate  that DGIB4SL performs stably when $K\leq 9$, primarily due to two reasons:}
\begin{itemize}
    \item \chenn{When the actual number of core subgraphs $c_i$ in the $i$-th enclosing graph satisfies $c_i \geq K$, DGIB4SL only needs to identify at least one core subgraph to make accurate predictions, and the specific number of identified core subgraphs has minimal impact on the results.}  
    \item \chenn{When $c_i$$<$$K$, the setting of $\beta_2$$=$$10^{-4}$ in DGIB biases the trade-off toward relevance (GIB) over diversity (DPP), causing DGIB4SL to prioritize generating core subgraphs relevant to the labels, even if some explanations may overlap.}
\end{itemize}
\chenn{For $K$$>$$9$, DGIB4SL's performance declines, primarily due to the increased number of parameters in the relational edge weight module (Eq. S11), which leads to overfitting. 
We set $K$$=$$3$ for DGIB4SL because most gene pairs have no more than three core subgraphs. This choice helps effectively prevent overfitting and reduce explanation overlap.}

\subsection{Stability analysis}
\begin{figure}[!t]
	\centering
	\includegraphics[width=\columnwidth]{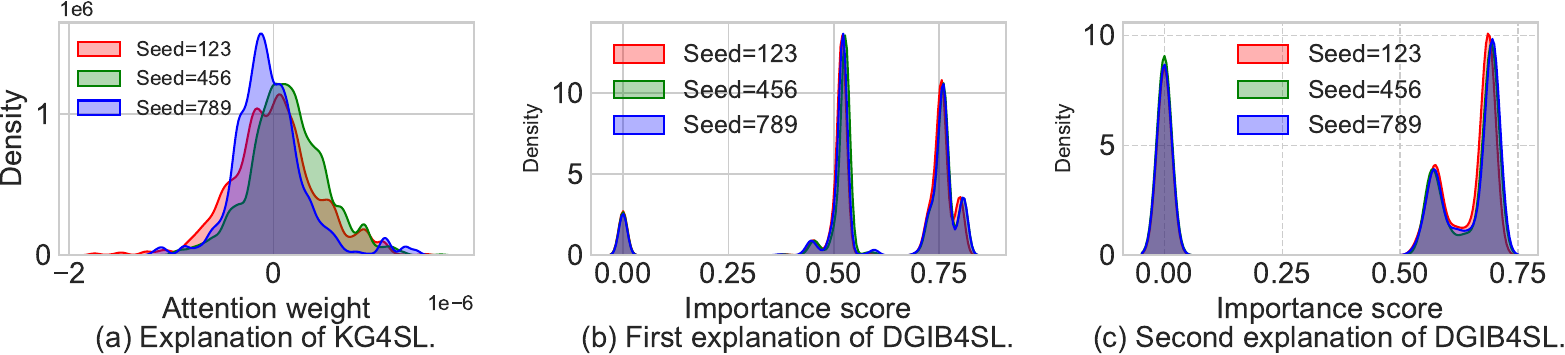} 
	\caption{Edge weight distribution of DGIB4SL and KG4SL for the same gene pair (ACTR10, PELO) under different random seeds.}\label{fig:stability_analysis}
\end{figure}

\chen{To evaluate the stability of DGIB4SL and attention-based methods, we introduced noise using three distinct random seeds to compare the edge importance distributions. Specifically, we ran DGIB4SL and KG4SL three times with different random seeds. Here, KG4SL was selected as a representative of attention-based methods due to its straightforward design and interpretability. For each run, kernel density estimation~\cite{parzen1962estimation} was applied to compute the distributions of the importance scores for each edge within the core graph of the gene pair (ACTR10, PELO).
As shown in Fig.~\ref{fig:stability_analysis}(a), the (unnormalized) attention weight distribution generated by KG4SL is unstable. In contrast, Fig.~\ref{fig:stability_analysis}(b) and (c) show that the distributions of edge weight (i.e., $(B_k^{uv})_{i,j}$ in Eq.~\ref{equ:AB}) generated by DGIB4SL for its two explanations largely overlap across different random seeds, demonstrating the stability of our DGIB4SL’s explainability.
}

\section{Conclusion and discussion}
We present Diverse Graph Information Bottleneck for Synthetic Lethality (DGIB4SL), an interpretable SL prediction framework that ensures trustworthy and diverse explanations. DGIB4SL introduces a novel DGIB objective with a Determinantal Point Process constraint to enhance diversity and employs a motif-based strategy to capture high-order graph information. A variational upper bound is proposed to address computational challenges, enabling efficient estimation. Experimental results show that DGIB4SL outperforms all baselines on the SynLethDB 2.0 dataset.

\chenn{A key limitation of DGIB4SL lies in the fixed number \( K \) for generating explanations, which may result in overlapping or incomplete explanations. Future work could explore an adaptive mechanism to dynamically adjust \( K \)  for each enclosing graph.}
\chenn{Additionally, DGIB4SL is a general framework for interaction prediction and can be applied to other domains requiring diverse and interpretable explanations, such as drug-drug interaction prediction and functional genomics research.}
\begin{framed}
\textbf{Key Points:}
\begin{itemize}
    \item We propose an interpretable knowledge graph neural network DGIB4SL that predicts synthetic lethality interactions with diverse explanations.
    \item We use the GIB principle to define a core subgraph of a gene pair, and extend the GIB objective to handle data with multiple core subgraphs, resulting in DGIB, which serves as the objective for DGIB4SL. 
    \item We apply motif-based GNNs to capture high-order graph structures.
    \item The model's effectiveness is validated through real-world data and case studies.
\end{itemize}    
\end{framed}


\bibliographystyle{unsrt}
\bibliography{reference}

\renewcommand{\thesection}{Appendix \Alph{section}}
\setcounter{section}{0} 
\section{Appendix}

\subsection{Explainability in graph neural networks}
\begin{table*}[htbp]
	\centering
	\caption{Notations and Descriptions.}
	\begin{tabular}{ll}
		\toprule
		Notations & Descriptions \\
		\midrule
		$(T)_{ij}$, $(T)_i$ & Element at the $i$-th row and the $j$-th column of matrix $T$,  and the $i$-the row of $T$.\\ 
		$G$$=$$(A, X, E)$     & Joint graph data of the SL graph and KG. \\
		$A$, $X$, $E$  &  Adjacency matrix, node feature matrix, and edge feature matrix of $G$.\\
		$G^{uv}$$=$$(A^{uv}, X^{uv}, E^{uv})$     &  Enclosing graph data for genes $u$ and $v$. \\
		$Y$ & Binary SL  interaction label of the gene pair $(u, v)$.\\
		$K \ge 1$  & Number of explanations  generated by DGIB4SL for a gene pair.\\
		$d_0, d_1, d_2, d_3$ & Dimensions of input features for nodes and relationships, relational space, and graph representation.\\
		$\widetilde G^{uv}_k$$=$$(\widetilde A_k^{uv}, \widetilde X_k^{uv})$ & $k$-th IB-graph data of $G^{uv}$.\\
		$\widetilde Z^{uv}_k \in \mathbb R^{d_3}$  & Graph representation of $\widetilde G^{uv}_k$.\\
		$f^\mu_\phi(\widetilde Z^{uv}_k)$, $f^{\Sigma}_\phi(\widetilde Z^{uv}_k)$ & Mean vector and diagonal covariance matrix of the distribution of   $\widetilde Z_k^{uv}$.\\
		$L^{uv} \in \mathbb R^{K \times K}$ & Inner product of $K$ graph representations $\widetilde Z^{uv}_1$, ..., $\widetilde Z^{uv}_K$.\\
		$B^{uv}_k$  &  Learnable edge importance weight matrix of $\widetilde G_k^{uv}$.\\
		$\beta_1, \beta_2$  &  Coefficients of KL and DPP in the DGIB objective function.\\
		$\Omega_i,  M^{uv}_i$  & Motif-based adjacency matrix of $\widetilde G^{uv}$ for a given motif $\Omega_i$.\\
		$W^1_k$, $W^2_k$, $\theta$, $\phi$ & Learnable parameters.\\
		\bottomrule
	\end{tabular}%
	\label{tab:notations}%
\end{table*}%

As GNNs are increasingly applied to SL prediction, understanding the reasoning behind their predictions becomes critical. The explainability of GNNs can be broadly categorized into two classes: post-hoc explanations and self-explainable GNNs. Post-hoc methods build an additional explainer to interpret a trained GNN, using techniques such as gradients~\cite{sundararajan2017axiomatic}, perturbation~\cite{ying2019gnnexplainer,luo2020parameterized}, or interpretable linear agents~\cite{DBLP:journals/tkde/HuangYTSC23}. 
However, these methods often fail to reveal the true reasoning process due to the inherent non-convexity and complexity of  GNNs~\cite{DBLP:conf/pkdd/PengLYZZS24,longa2024explaining}. 
Self-explainable GNNs address the limitations of  post-hoc approaches by providing predictions and explanations simultaneously. Two major directions in this area are: 
(1) Information bottleneck (IB) approaches: these methods use the IB principle~\cite{DBLP:journals/corr/physics-0004057} as a training objective to extract subgraphs closely related to graph labels~\cite{miao2022interpretable,sun2022graph,yu2022improving,seo2024interpretable}.
(2) Prototype learning: approaches like ProtGNN~\cite{zhang2022protgnn} identify subgraphs most relevant to graph patterns associated with specific classes. 
Beyond these, other methods like DIR~\cite{DBLP:conf/iclr/WuWZ0C22} identifies causal patterns via distribution interventions and models classifiers based on causal and non-causal components, while GREA~\cite{liu2022graph} introduces environment replacement to generate virtual data examples for better pattern identification. 
Despite their advantages, these methods face challenges when applied to SL prediction. Similar to KG-based approaches, self-explainable GNNs are limited in generating multiple explanations and capturing higher-order graph structures essential for prediction.

\subsection{Intuition for Measuring Diversity in Eq.~3}
Intuitively, each entry in the matrix $L^{uv}$ in Eq.~4 represents the similarity between the representations of two elements in the set, computed using dot products.  Eq.~3 measures the diversity of elements within a set by leveraging the principle that, when the feature vectors of the elements are not linearly correlated, the volume of the hypercube they form is maximized.  Consequently, the determinant of $L^{uv}$ in Eq.~3 quantifies this volume, effectively capturing the diversity of the feature vectors~\cite{kulesza2012determinantal}.

\subsection{Proof of Eq.~6}\label{sec:bound_igy}%
\begin{proposition}\label{pro:bound_iy}
	\textbf{(Upper bound of $-I(Y;\widetilde G^{uv}_k)$)}.
	For a graph $G^{uv}$ with label  $Y$ and the $k$-th IB-Graph $\widetilde G_k^{uv}$ learned from $G^{uv}$, we have
	\begin{equation}\label{equ:bound_iy}
	\small
	-I\left(Y; \widetilde G_k^{uv}\right) \le -\mathbb E_{Y, \widetilde G_k^{uv}} \Big[\log q_\theta(Y|\widetilde G_k^{uv})\Big]  +  H(Y),
	\end{equation}
	where $q_\theta(Y| \widetilde G^{uv}_k)$ parameterized by $\theta$ is the variational approximation of $p(Y| \widetilde G^{uv}_k)$.
\end{proposition}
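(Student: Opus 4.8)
The plan is to establish the variational upper bound for $-I(Y;\widetilde G_k^{uv})$ by starting from the definition of mutual information and introducing the variational approximation $q_\theta(Y|\widetilde G_k^{uv})$ to replace the intractable true posterior $p(Y|\widetilde G_k^{uv})$. First, I would write the mutual information as
\begin{equation}\label{equ:mi_def}
\small
I(Y;\widetilde G_k^{uv}) = \mathbb E_{Y, \widetilde G_k^{uv}}\left[\log \frac{p(Y|\widetilde G_k^{uv})}{p(Y)}\right] = \mathbb E_{Y, \widetilde G_k^{uv}}\left[\log p(Y|\widetilde G_k^{uv})\right] + H(Y),
\end{equation}
where $H(Y) = -\mathbb E_Y[\log p(Y)]$ is the entropy of the label.

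Next, I would multiply and divide by the variational approximation $q_\theta(Y|\widetilde G_k^{uv})$ inside the logarithm to separate a tractable term from a KL divergence. Specifically, I would rewrite $\log p(Y|\widetilde G_k^{uv}) = \log q_\theta(Y|\widetilde G_k^{uv}) + \log \frac{p(Y|\widetilde G_k^{uv})}{q_\theta(Y|\widetilde G_k^{uv})}$, which upon taking expectations yields
\begin{equation}\label{equ:mi_split}
\small
I(Y;\widetilde G_k^{uv}) = \mathbb E_{Y, \widetilde G_k^{uv}}\left[\log q_\theta(Y|\widetilde G_k^{uv})\right] + \mathbb E_{\widetilde G_k^{uv}}\left[D_{\text{KL}}\big(p(Y|\widetilde G_k^{uv}) \,\|\, q_\theta(Y|\widetilde G_k^{uv})\big)\right] + H(Y).
\end{equation}

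The crucial step is then to invoke the non-negativity of the KL divergence, $D_{\text{KL}}(p \,\|\, q_\theta) \ge 0$, which allows me to drop that term and obtain the inequality $I(Y;\widetilde G_k^{uv}) \ge \mathbb E_{Y, \widetilde G_k^{uv}}[\log q_\theta(Y|\widetilde G_k^{uv})] + H(Y)$. Negating both sides immediately gives the claimed bound in Eq.~\ref{equ:bound_iy}. The main obstacle is not analytical difficulty but rather bookkeeping: I must be careful that the expectations are taken over the correct joint and marginal distributions induced by the Markov chain $Y \leftrightarrow G^{uv} \leftrightarrow \widetilde G_k^{uv}$, and that $H(Y)$ is treated as a constant with respect to the optimization (which it is, since it depends only on the fixed label distribution and can therefore be safely discarded when this bound is plugged into the DGIB objective in Eq.~\ref{equ:gib_ddp2}). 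Verifying that the variational family for $q_\theta$ is expressive enough for the bound to be reasonably tight is a modeling concern rather than part of the proof, so I would relegate that remark to the discussion.
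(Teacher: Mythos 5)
Your proposal is correct and follows essentially the same route as the paper's proof: write $I(Y;\widetilde G^{uv}_k)$ as an expectation of a log-ratio, insert the variational approximation $q_\theta(Y|\widetilde G^{uv}_k)$, and discard the resulting $\mathbb E_{\widetilde G^{uv}_k}[D_{\text{KL}}(p(Y|\widetilde G^{uv}_k)\,\|\,q_\theta(Y|\widetilde G^{uv}_k))]$ term by non-negativity of the KL divergence; whether one peels off $H(Y)$ before or after introducing $q_\theta$ is immaterial. One small bookkeeping note: since $\mathbb E_{Y,\widetilde G^{uv}_k}[\log p(Y)]=-H(Y)$, your decomposition actually yields the tighter bound $-I(Y;\widetilde G^{uv}_k)\le -\mathbb E_{Y,\widetilde G^{uv}_k}[\log q_\theta(Y|\widetilde G^{uv}_k)]-H(Y)$, which implies the stated inequality with $+H(Y)$ because $H(Y)\ge 0$ (the paper reaches the $+H(Y)$ form via a sign slip in its penultimate line); in either form $H(Y)$ is a constant that drops out of the optimization, as you observe.
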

\vspace{-2em} 
\begin{proof}
	\begin{equation}\label{equ:asfd}
	\small
	\begin{aligned}
	I(Y; \widetilde G^{uv}_k) &= \int \int p(Y, \widetilde G^{uv}_k) \log \frac{p(Y, \widetilde G^{uv}_k)}{p(Y)p(\widetilde G^{uv}_k)} dY d \widetilde G^{uv}_k\\
	&= \int \int p(Y, \widetilde G^{uv}_k) \log \frac{p(Y| \widetilde G^{uv}_k)}{p(Y)} dY d \widetilde G^{uv}_k\\
	&=\mathbb E_{Y, \widetilde G^{uv}_k} \Big[\log\frac{p(Y|\widetilde G^{uv}_k)}{p(Y)}\Big]
	\end{aligned}
	\end{equation}
	Since $p(Y| \widetilde G^{uv}_k)$ is intractable, we introduce a variational approximation $q_\theta(Y| \widetilde G^{uv}_k)$ for it. Then we have
	\begin{equation}\label{equ:sagggx}
	\small
	\begin{aligned}
	&I(Y; \widetilde G^{uv}_k)\\
	&= \mathbb E_{Y, \widetilde G^{uv}_k} \Big[\log\frac{q_\theta(Y|\widetilde G^{uv}_k)}{p(Y)}\frac{p(Y|\widetilde G^{(k)})}{q_\theta(Y|\widetilde G^{uv}_k)}\Big]\\
	&= \mathbb E_{Y, \widetilde G^{uv}_k} \Big[\log\frac{q_\theta(Y|\widetilde G^{uv}_k)}{p(Y)}\Big] +  \mathbb E_{Y, \widetilde G^{uv}_k} \Big[\log\frac{p(Y|\widetilde G^{uv}_k)}{q_\theta(Y|\widetilde G^{uv}_k)}\Big],
	\end{aligned}
	\end{equation}
	where 
	\begin{equation}\label{equ:asfgw}
	\small
	\begin{aligned}
	&\mathbb E_{Y, \widetilde G^{uv}_k} \Big[\log\frac{p(Y|\widetilde G^{uv}_k)}{q_\theta(Y|\widetilde G^{uv}_k)}\Big]\\
	&= \int\int p(\widetilde G^{uv}_k) p(Y|\widetilde G^{uv}_k) \log\frac{p(Y|\widetilde G^{uv}_k)}{q_\theta(Y|\widetilde G^{uv}_k)}   dY d \widetilde G^{uv}_k\\
	&=\mathbb E_{\widetilde G^{uv}_k} [D_{\text{KL}}(p(Y|\widetilde G^{uv}_k)||q_\theta(Y|\widetilde G^{uv}_k))].
	\end{aligned}
	\end{equation}
	Plug Eq.~\ref{equ:asfgw} into Eq.~\ref{equ:sagggx}, we have
	\begin{equation}
	\small
	\begin{aligned}
	&I(Y; \widetilde G^{uv}_k)\\
	&= \mathbb E_{Y, \widetilde G^{uv}_k} \Big[\log\frac{q_\theta(Y|\widetilde G^{uv}_k)}{p(Y)}\Big] + \mathbb E_{\widetilde G^{uv}_k} [D_{\text{KL}}(p(Y|\widetilde G^{uv}_k)||q_\theta(Y|\widetilde G^{uv}_k))]\\
	&\ge \mathbb E_{Y, \widetilde G^{uv}_k} \Big[\log\frac{q_\theta(Y|\widetilde G^{uv}_k)}{p(Y)}\Big]~(\text{non-negativity of KL Divergence})\\
	&=\mathbb E_{Y, \widetilde G^{uv}_k} \Big[\log q_\theta(Y|\widetilde G^{uv}_k)\Big] - \mathbb E_{Y, \widetilde G^{uv}_k} [\log p(Y)]\\
	&=\mathbb E_{Y, \widetilde G^{uv}_k} \Big[\log q_\theta(Y|\widetilde G^{uv}_k)\Big]    -  H(Y)~(\text{normalization of}\\
	&~~~~~~~~~~~~~~~~~~~~~~~~~~~~~~~~~~~~~~~~~~~~~~~~~\text{PDF $p(\widetilde G^{uv}_k|Y)$}\\
	&\Rightarrow  -I(Y; \widetilde G^{uv}_k) \le -\mathbb E_{Y, \widetilde G^{uv}_k} \Big[\log q_\theta(Y|\widetilde G^{uv}_k)\Big]    +  H(Y)\\
	\end{aligned}
	\end{equation}
	where $H(Y)$ is the entropy of label $Y$, which can be ignored in the optimization procedure.
\end{proof}
\begin{proposition}\label{pro:bound_gg}
	\textbf{(Upper bound of $I(G^{uv};\widetilde G^{uv}_k)$)}. For a graph $G^{uv}$ and the $k$-th IB-Graph $\widetilde G_k^{uv}$ learned from $G^{uv}$, we have
	\begin{equation}\label{equ:bound_gg}
	\small
	I(G^{uv}; \widetilde G^{uv}_k) \le \mathbb E_{G^{uv}} [D_{\text{KL}}(q_\phi(\widetilde G^{uv}_k|G^{uv})||q(\widetilde G^{uv}_k)],
	\end{equation}
	where $q(\widetilde G^{uv}_k)$$=$$\sum_{G^{uv}} p(G^{uv}) q_\phi(\widetilde G^{uv}_k | G^{uv})$ and $q_\phi(\widetilde G^{uv}_k | G^{uv})$ parameterized by $\phi$ is the variational approximation of $p(\widetilde G^{uv}_k | G^{uv})$, $D_{\text{KL}}(\cdot)$ denotes the Kullback-Leibler (KL) divergence.
\end{proposition}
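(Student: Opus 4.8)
The plan is to mirror the derivation of Proposition~\ref{pro:bound_iy}, but to bound the compression term \emph{from above} rather than from below, the whole inequality resting on the non-negativity of a single KL divergence. First I would expand the mutual information directly from its definition, factoring the joint through the stochastic encoder $q_\phi$ that actually generates the IB-graph (so that the conditional $p(\widetilde G^{uv}_k|G^{uv})$ \emph{is} $q_\phi(\widetilde G^{uv}_k|G^{uv})$ by construction), and rewrite it as an expected KL against the induced marginal:
\begin{equation}
\small
I(G^{uv};\widetilde G^{uv}_k) = \mathbb E_{G^{uv},\widetilde G^{uv}_k}\Big[\log\frac{q_\phi(\widetilde G^{uv}_k|G^{uv})}{p(\widetilde G^{uv}_k)}\Big] = \mathbb E_{G^{uv}}\big[D_{\text{KL}}(q_\phi(\widetilde G^{uv}_k|G^{uv})\,\|\,p(\widetilde G^{uv}_k))\big],
\end{equation}
where $p(\widetilde G^{uv}_k)=\sum_{G^{uv}}p(G^{uv})q_\phi(\widetilde G^{uv}_k|G^{uv})$ is the encoder-induced (and intractable) marginal of the IB-graph.

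The core step is to swap this intractable marginal for the tractable surrogate $q(\widetilde G^{uv}_k)$ appearing in the statement. I would add and subtract $\log q(\widetilde G^{uv}_k)$ inside the expectation; regrouping the two log-ratios splits the expected KL against $q$ into the mutual information plus a correction term,
\begin{equation}
\small
\mathbb E_{G^{uv}}\big[D_{\text{KL}}(q_\phi(\widetilde G^{uv}_k|G^{uv})\,\|\,q(\widetilde G^{uv}_k))\big] = I(G^{uv};\widetilde G^{uv}_k) + D_{\text{KL}}(p(\widetilde G^{uv}_k)\,\|\,q(\widetilde G^{uv}_k)).
\end{equation}
Since the correction term $D_{\text{KL}}(p(\widetilde G^{uv}_k)\,\|\,q(\widetilde G^{uv}_k))$ is a KL divergence between two distributions over $\widetilde G^{uv}_k$, it is non-negative, and discarding it yields exactly the claimed inequality. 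I note that for the particular $q$ in the statement (the aggregated posterior $\sum_{G^{uv}}p(G^{uv})q_\phi(\widetilde G^{uv}_k|G^{uv})$) this correction vanishes and the bound is tight; it becomes a genuine inequality precisely when $q$ is replaced by a tractable prior, as is done in the implementation via the spherical Gaussian of Eq.~\ref{equ:qz}.

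The main obstacle here is conceptual rather than computational: it is the marginalization itself, since $p(\widetilde G^{uv}_k)$ integrates the encoder over the entire data distribution and cannot be evaluated, which is the very reason the variational surrogate $q$ is introduced. I would therefore take care to (i) justify that \emph{any} fixed distribution over $\widetilde G^{uv}_k$ is an admissible choice of $q$, so that the estimator remains an upper bound regardless of how well $q$ approximates the induced marginal, and (ii) verify that the \emph{only} inequality invoked is the non-negativity of $D_{\text{KL}}(p(\widetilde G^{uv}_k)\,\|\,q(\widetilde G^{uv}_k))$. This non-negativity-of-KL argument is the compression-side analogue of the step in Proposition~\ref{pro:bound_iy} where $\mathbb E_{\widetilde G^{uv}_k}[D_{\text{KL}}(p(Y|\widetilde G^{uv}_k)\,\|\,q_\theta(Y|\widetilde G^{uv}_k))]$ was dropped, and together the two propositions furnish the tractable surrogates that turn the DGIB objective in Eq.~\ref{equ:gib_ddp2} into something optimizable.
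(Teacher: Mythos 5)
Your proof is correct and follows essentially the same route as the paper's: rewrite $I(G^{uv};\widetilde G^{uv}_k)$ using the encoder $q_\phi$ as the conditional, introduce the surrogate marginal $q(\widetilde G^{uv}_k)$ by adding and subtracting its log, identify the residual as $D_{\text{KL}}(p(\widetilde G^{uv}_k)\,\|\,q(\widetilde G^{uv}_k))\ge 0$, and drop it. Your added remark that the bound is tight for the aggregated posterior and only becomes a strict surrogate once $q$ is replaced by a tractable prior is accurate and goes slightly beyond what the paper makes explicit, but it does not change the argument.
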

\begin{proof}
	\begin{equation}\label{equ:dzg}
	\small
	\begin{aligned}
	I(G^{uv}; \widetilde G^{uv}_k) &= \int \int p(G^{uv}, \widetilde G^{uv}_k) \log \frac{p(G^{uv}, \widetilde G^{uv}_k)}{p(G^{uv})p(\widetilde G^{uv}_k)} dG^{uv} d \widetilde G^{uv}_k\\
	&= \int \int p(G^{uv}, \widetilde G^{uv}_k) \log \frac{ q_\phi(\widetilde G^{uv}_k|G^{uv})}{p(\widetilde G^{uv}_k)} dG^{uv} d \widetilde G^{uv}_k\\
	&=\mathbb E_{G^{uv}, \widetilde G^{uv}_k} \Big[\log\frac{q_\phi(\widetilde G^{uv}_k|G^{uv})}{p(\widetilde G^{uv}_k)}\Big].
	\end{aligned}
	\end{equation}
	Since $p(\widetilde G^{uv}_k)$ is intractable, we introduce a variational approximation $q(\widetilde G^{uv}_k)$$=$$\sum_{G^{uv}} p(G^{uv}) q_\phi(\widetilde G^{uv}_k | G^{uv})$ for the marginal distribution $p(\widetilde G^{uv}_k)$. Then we have
	\begin{equation}\label{equ:zs22}
	\small
	\begin{aligned}
	&I(G^{uv}; \widetilde G^{uv}_k) =\mathbb E_{G^{uv}, \widetilde G^{uv}_k} \Big[\log\frac{q(\widetilde G^{uv}_k)}{p(\widetilde G^{uv}_k)}\frac{q_\phi(\widetilde G^{uv}_k|G^{uv})}{q(\widetilde G^{uv}_k)}\Big]\\
	&=\mathbb E_{G^{uv}, \widetilde G^{uv}_k} \Big[\log\frac{q_\phi(\widetilde G^{uv}_k|G^{uv})}{q(\widetilde G^{uv}_k)}\Big] 
	+ \mathbb E_{G^{uv}, \widetilde G^{uv}_k} \Big[\log \frac{q(\widetilde G^{uv}_k)}{p(\widetilde G^{uv}_k)} \Big]\\
	&=\mathbb E_{G^{uv}, \widetilde G^{uv}_k} \Big[\log\frac{q_\phi(\widetilde G^{uv}_k|G^{uv})}{q(\widetilde G^{uv}_k)}\Big] 
	- \mathbb E_{G^{uv}, \widetilde G^{uv}_k} \Big[\log \frac{p(\widetilde G^{uv}_k)}{q(\widetilde G^{uv}_k)} \Big],
	\end{aligned}
	\end{equation}
	where
	\begin{equation}\label{equ:aftt}
	\small
	\begin{aligned}
	&\mathbb E_{G^{uv}, \widetilde G^{uv}_k} \Big[\log \frac{p(\widetilde G^{uv}_k)}{q(\widetilde G^{uv}_k)} \Big]\\
	&= \int p(\widetilde G^{uv}_k)\log \frac{p(\widetilde G^{uv}_k)}{q(\widetilde G^{uv}_k)}  \int p(G^{uv}|\widetilde G^{uv}_k) d G^{uv}  d \widetilde G^{uv}_k\\
	&= \int p(\widetilde G^{uv}_k)\log \frac{p(\widetilde G^{uv}_k)}{q(\widetilde G^{uv}_k)}   d \widetilde G^{uv}_k\\
	&= D_{\text{KL}} (p(\widetilde G^{uv}_k)||q(\widetilde G^{uv}_k))
	\end{aligned}
	\end{equation}
	Plug Eq.~\ref{equ:aftt} into Eq.~\ref{equ:zs22}, we have
	\begin{equation}
	\begin{aligned}
	\small
	&I(G^{uv}; \widetilde G^{uv}_k)\\
	&= \mathbb E_{G^{uv}, \widetilde G^{uv}_k} \Big[\log\frac{q_\phi(\widetilde G^{uv}_k|G^{uv})}{q(\widetilde G^{uv}_k)}\Big] - D_{\text{KL}} (p(\widetilde G^{uv}_k)||q(\widetilde G^{uv}_k))\\
	&\le \mathbb E_{G^{uv}, \widetilde G^{uv}_k} \Big[\log\frac{q_\phi(\widetilde G^{uv}_k|G^{uv})}{q(\widetilde G^{uv}_k)}\Big]~(\text{non-negativity of KL Divergence})\\
	&=\mathbb E_{G^{uv}} [D_{\text{KL}}(q_\phi(\widetilde G^{uv}_k|G^{uv})||q(\widetilde G^{uv}_k)]
	\end{aligned}
	\end{equation}
\end{proof}

\subsection{The calculation method of $B_k^{uv}$}
The calculation of $B_k^{uv}$ is as follows: For any edge $(i,j)$ not present in the enclosing graph $G^{uv}$, $(B_k^{uv})_{i,j}$ is fixed to 0, since our goal is to extract a subgraph from $G^{uv}$.  For edges $(i, j)$ in $G^{uv}$$=$$(A^{uv}, X^{uv}, E^{uv})$, inspired by the approaches of Wang et al.~\cite{wang2019kgat}, we optimize $(B^{uv}_k)_{i,j}$ jointly with   relational graph learning by the following function (corresponding to $\psi_k$ in Fig.~2): 
\begin{equation}\label{equ:bij}
(B_k^{uv})_{i,j} := (W^1_k (X^{uv})_j) \cdot \tanh(W^1_k (X^{uv})_i + W^2_k (E^{uv})_r)^\mathrm{T},
\end{equation}
where $W^1_k$$\in$$\mathbb R^{d_2\times d_0}$ and $W^2_k$$\in$$\mathbb R^{d_2\times d_1}$ are learnable weight matrices for mapping node and relation features into the same space, respectively. Here, $(E^{uv})_r$$\in$$\mathbb R^{d_1}$ represents the feature of relation $r$ in the triple $(i, r, j)$ and $\tanh(\cdot)$ is a nonlinear activation function. 
This formulation ensures that $(B^{uv}_k)_{i,j}$ depends on the distance between the features  $(X^{uv})_i$, $(X^{uv})_j$ of nodes $i$ and $j$ in the space defined by relation $r$~\cite{wang2019kgat}. 
Intuitively, $(B_k^{uv})_{i,j}$ reflects the SL prediction-specific importance of the edge, with smaller values indicating noise that should be assigned lower weights or excluded. Note that $\widetilde A_k^{uv}$ is not differentiable  with respect to $B_k^{uv}$ due to Bernoulli sampling, we employ the concrete relaxation~\cite{DBLP:conf/iclr/JangGP17} to approximate sampling for the IB-Graph: 
\begin{equation}\label{equ:gumbel}
(\widetilde A^{uv}_k)_{i,j} = \text{sigmoid}(\frac{1}{\tau} (\log \frac{(B^{uv}_k)_{i,j}}{1 - (B^{uv}_k)_{i,j}} + \log \frac{\epsilon}{1 - \epsilon})),
\end{equation}
where $\epsilon$$\sim$$\text{Uniform}(0,1)$ and $\tau$$\in$$\mathbb R^+$ is the temperature of the concrete distribution.

\subsection{\chenn{Construction of the IB-Graph}}
\begin{figure*}[h]
	\centering
	\includegraphics[width=1.8\columnwidth]{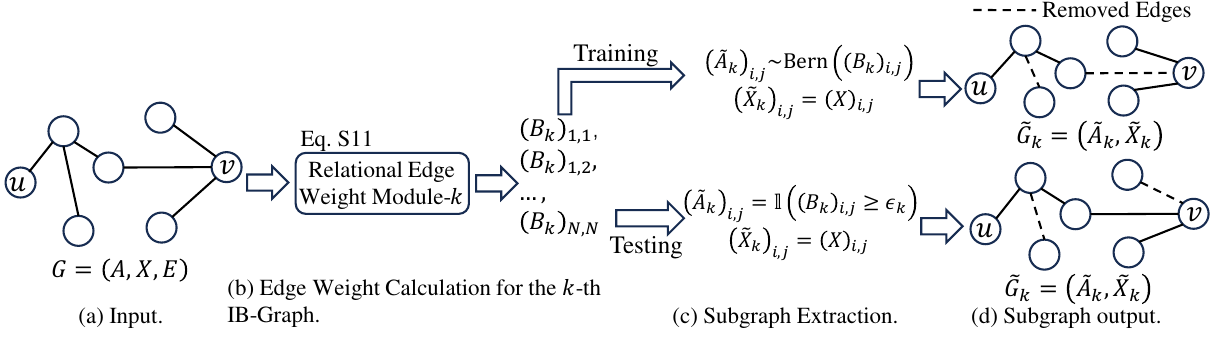} 
	\caption{\chenn{Overview for extracting the $k$-th IB-graph from the enclosing graph $G$$=$$(A,X,E)$ (for simplicity, the superscript $uv$ is omitted) around the pair of genes $u$ and $v$.}}
	\label{fig:ib_graph}
\end{figure*}
\chenn{The construction process of the IB-Graph, as shown in Fig.~\ref{fig:ib_graph}, involves four steps, which slightly differ between the training and testing phases. During the training phase:}  
\begin{enumerate}
	\item \chenn{\textbf{Input}: The enclosing graph data \( G = (A, X, E) \) (with the superscript \( uv \) omitted for simplicity) is provided for a given gene pair \( (u, v) \).}  
	\item \chenn{\textbf{Edge weight estimation}: The weights of all edges in \( G \) are estimated using Eq.~\ref{equ:bij}, which serves as the edge-weighting module.}  
	\item \chenn{\textbf{Subgraph extraction}: These edge weights are treated as parameters of independent Bernoulli variables, and random sampling assigns each edge a value of 1 or 0.}  
	\item \chenn{\textbf{Subgraph output}: The sampled edge values are aggregated to form a binary adjacency matrix \( \tilde A_k \), representing the adjacency matrix of the \( k \)-th IB-graph. The node feature matrix \( \tilde X_k \) is retained as equal to \( X \), as our focus is on edge-level explanation.} 
\end{enumerate}
\chenn{Note that sampling from a Bernoulli distribution is not differentiable, the Gumbel-Softmax trick is thus used during optimization, as detailed in Eq.~\ref{equ:gumbel}.}

\chenn{In the testing phase, the process differs only in the third step. Instead of sampling, a predefined importance threshold \( \epsilon_k \), derived from the learned importance distribution, determines whether an edge exists. In our experiments, this threshold is set as the median of the ranked edge weights in \( B_k \).} \chennn{The reason why the subgraph extraction methods in the third step differ is that subgraph selection during training is intended to optimize the model's parameters, while during testing, it aims to infer the best estimation of the explanatory subgraph.  This difference also aligns with the standard practice in GIB-based methods, as seen in the literature~\cite{miao2022interpretable,yu2022improving}.}

\subsection{Detailed information of baselines}
\begin{table}[htbp]
	\centering
	\caption{Numbers of the entities in the SynLethKG.}
	\begin{tabular}{lr}
		\toprule
		\textbf{Type} & \textbf{No. of entities} \\
		\midrule
		Gene              & 25,260 \\
		Biological process & 12,703 \\
		Side effect       & 5,702 \\
		Molecular function & 3,203 \\
		Pathway           & 2,069 \\
		Compound          & 2,065 \\
		Cellular component & 1,670 \\
		Symptom           & 427   \\
		Anatomy           & 400   \\
		Pharmacologic class & 377   \\
		Disease           & 136   \\
		\bottomrule
	\end{tabular}
	\label{tab:synlethkg_entities}
\end{table}
\begin{table}[htbp]
	\centering
	\caption{Summary of the relationship in the SynLethKG.} 
	\label{tab:relation_types}
	\begin{tabular}{lr}
		\toprule
		\textbf{Type} & \textbf{No. of entities} \\
		\midrule
		(Gene, Regulates,Gene)         & 267,791 \\
		(Gene, Interacts, Gene)         & 148,379 \\
		(Gene, Covaries, Gene)          & 62,987  \\
		(Anatomy, Expresses, Gene)      & 617,175 \\
		(Disease, Associates, Gene)     & 24,328  \\
		(Disease, Upregulates, Gene)    & 7,730   \\
		(Compound, Downregulates, Gene) & 21,526  \\
		(Disease, Downregulates, Gene)  & 7,616   \\
		(Compound, Binds, Gene)         & 16,323  \\
		(Compound, Upregulates, Gene)   & 19,200  \\
		(Anatomy, Upregulates, Gene)    & 26      \\
		(Anatomy, Downregulates, Gene)  & 31      \\
		(Gene, Participates, Cellular component)        & 97,652  \\
		(Gene, Participates, Biological process)        & 619,712 \\
		(Compound, Causes, SideEffect)  & 139,428 \\
		(Gene, Participates, Molecular function)        & 110,042 \\
		(Gene, Participates, Pathway)   & 57,441  \\
		(Compound, Treats, Disease)     & 752     \\
		(Compound, Resembles, Compound) & 6,266   \\
		(Pharmacologic Class, Includes, compound) & 1,205   \\
		(Disease, Localizes, Anatomy)   & 3,373   \\
		(Disease, Presents, Symptom)    & 3,401   \\
		(Compound, Palliates, Disease)  & 384     \\
		(Disease, Resembles, Disease)   & 404     \\
		\bottomrule
	\end{tabular}
\end{table}
\begin{table}[bp]
	\centering
	\caption{Summary of the relationship in the core graphs in Fig.~4 output by DGIB4SL. 
	}
	\label{tab:relation_types}
	\begin{tabular}{lr}
		\toprule
		\textbf{Label} & \textbf{Description} \\
		\midrule
		includes$\_$PCiC         & Pharmacological class includes compound. \\
		palliates$\_$Cpd         & Compound  palliates disease. \\
		participates$\_$GpMF          & Gene participates in molecular function.  \\
		presents$\_$Dps      & Disease presents symptom. \\
		\bottomrule
	\end{tabular}
\end{table}
We evaluated our DGIB4SL against two categories of methods: matrix factorization (MF)-based methods and graph neural network (GNN)-based methods, selecting thirteen recent approaches.
\begin{itemize}
	\item \textbf{GRSMF}~\cite{DBLP:journals/bmcbi/HuangWLOZ19} reconstructs the SL interaction graph using graph-regularized self-representative MF, incorporating PPI and GO for regularization.
	\item \textbf{SL$^2$MF}~\cite{DBLP:journals/tcbb/LiuWLLZ20} employs logistic MF for SL prediction, integrating importance weighting and PPI/GO information.
	\item \textbf{CMFW}~\cite{DBLP:journals/bioinformatics/LianyJR20} combines multiple data sources using collective matrix factorization to generate latent representations.    
\end{itemize}
GNN-based methods include:
\begin{itemize}
	\item \textbf{DDGCN}~\cite{DBLP:journals/bioinformatics/CaiCFWHW20} utilizes GCN with SL interaction matrix features and applies dropout techniques to address sparse graphs.
	\item \textbf{GCATSL}~\cite{DBLP:journals/bioinformatics/LongWLZKLL21} uses a dual attention mechanism with SL, PPI, and GO as input graphs to complete the SL graph.
	\item \textbf{SLMGAE}~\cite{DBLP:journals/titb/HaoWFWCL21} implements a multi-view graph autoencoder, integrating SL, PPI, and GO graphs for prediction.
	\item \textbf{MGE4SL}~\cite{DBLP:conf/embc/LaiCYYJWZ21} leverages a Multi-Graph Ensemble to combine PPI, GO, and Pathway data using neural network embeddings.
	\item \textbf{PTGNN}~\cite{DBLP:journals/bioinformatics/LongWLFKCLL22} pre-trains GNNs with various data sources and graph-based reconstruction features.
	\item \textbf{KG4SL}~\cite{wang2021kg4sl} is the first GNN-based model to integrate a knowledge graph for SL prediction, utilizing an attention mechanism.
	\item \textbf{PiLSL}~\cite{liu2022pilsl} extracts pairwise local subgraphs for SL prediction and integrates multi-omics data with attention mechanisms.
	\item \textbf{NSF4SL}~\cite{wang2022nsf4sl} uses contrastive learning with pre-trained KG-based features (e.g., TransE\cite{DBLP:conf/nips/BordesUGWY13}) as input for neural embeddings.
	\item \textbf{KR4SL}~\cite{zhang2023kr4sl} encodes structural information, textual semantic information, and sequential semantics for gene representations, and uses different attention mechanisms to select important edges in each hop.
	\item \textbf{SLGNN}~\cite{zhu2023slgnn} generates gene embeddings with factor-based message passing and identifies important factors through attention mechanisms, where factors consist of relationships in the KG.
\end{itemize}

\subsection{\chenn{Explainability evaluation metrics}}
\chenn{Given the lack of ground-truth explanations in the SL dataset, we employed two metrics to evaluate explanation \textbf{accuracy}:}
\begin{itemize}
	\item \chenn{\textbf{Infidelity}~\cite{yeh2019fidelity} measures explanation faithfulness, where more important features (e.g., edges or relational features) should cause larger prediction changes when altered. It introduces random perturbations to the input features, weights them by importance, and compares the changes in features and predictions to evaluate consistency.}
	\item \chenn{\textbf{Sparseness}~\cite{chalasani2020concise} evaluates explanation sparsity using the Gini index. Higher Sparseness indicates that the explanation concentrates on fewer important features, effectively reducing redundant information.}
\end{itemize}

\subsection{Implementation details}
To evaluate the effectiveness of our DGIB4SL, we adopted the same data processing, splitting, and evaluation metrics as SLB~\cite{feng2023benchmarking}. 
For dataset processing, the number of positive samples (i.e. known SL pairs) was balanced with an equal number of negative samples. Negative samples were generated using a common strategy of randomly selecting gene pairs from unknown samples.

The configuration of DGIB4SL and baseline models for the SL prediction task was as follows: For our DGIB4SL, we set $d_2$$=$$d_3$$=$$6$ (Eq.~\ref{equ:bij}, Eq. 4). Next, the coefficients $\beta_1$ and $\beta_2$ in Eq. 6 were set $\beta_1$$=$$\beta_2$$=$$10^{-4}$. 
We further set the number of explanations $K$ in Eq. 6 for each sample to 3 (\chenn{for the appropriate range of \( K \), please refer to the next section}). 
We used the Stochastic Gradient Descent (SGD) optimizer with a learning rate 0.005. The maximum number of training epochs was set as 5. For the baselines, we tuned their settings according to SLB~\cite{feng2023benchmarking}. Specifically, the output embedding dimension was set to 256 and the hidden embedding dimension was set to 512. 
The learning rate was tuned empirically. 
An early stopping strategy was used to avoid overfitting during training. The number of layers in most GNNs was set to 1, except for KR4SL, which used a 3-layer GNN to generate path-based explanations.
\chenn{We set the rank position $C$ in the ranking metrics to 10 and 50 to represent $0.1\%$ and $0.7\%$ of the 7,183 candidate gene pairs, offering two levels of ranking difficulty. These values also align with those used in related studies (e.g., NSF4SL~\cite{wang2022nsf4sl}, KR4SL~\cite{zhang2023kr4sl}).}

\subsection{\chenn{Determining the range of $K$}}
\begin{figure}[h]
	\centering
	\includegraphics[width=0.75\columnwidth]{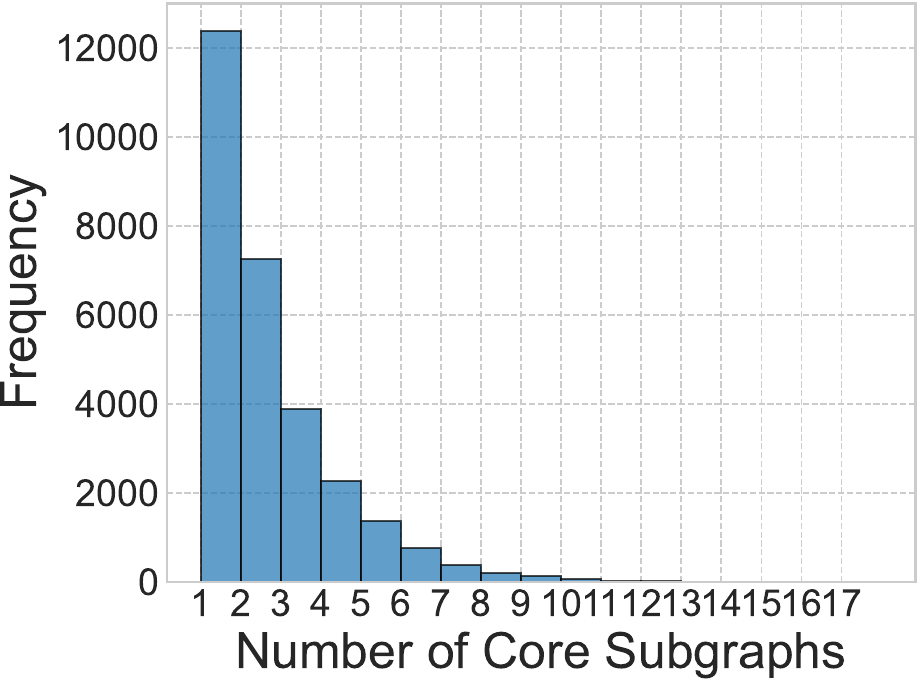} 
	\caption{\chenn{Frequency histogram of estimated core subgraphs for SL pairs enclosing graphs in the training set.}}
	\label{fig:num_core}
\end{figure}
\chenn{To determine the range of \( K \) (the number of explanations generated by DGIB4SL for each gene pair), it is necessary to clearly specify its lower and upper bounds. The lower bound is the minimum number of core subgraphs across all SL gene pairs, which is clearly 1. The upper bound represents the maximum number of core subgraphs among all possible SL gene pairs. Since calculating the exact number of core subgraphs for all SL gene pairs is unrealistic, we use a heuristic approach to estimate the range of \( K \) based on the training set. This method is guided by the intuition that core subgraphs are densely connected internally and sparsely connected externally, which is divided into 4 steps:}
\begin{enumerate}
	\item \chenn{For each SL gene pair in the training set, we compute node importance scores in the enclosing graph using the PageRank (PR) algorithm~\cite{ilprints422}.}
	\item \chenn{Nodes with importance scores below the $90$-$th$ percentile are considered unimportant and removed from the graph.}
	\item \chenn{The number of connected components formed by the remaining nodes serves as an estimate of the number of core subgraphs in the given enclosing graph.}
	\item \chenn{Fig.~\ref{fig:num_core} presents the frequency histogram of core subgraph counts in the training set. Thus, the approximate range for $K$ is [1, 17].}
\end{enumerate}

\end{document}